%
%
%
%

\documentclass[runningheads,a4paper]{llncs}
\linespread{0.95}
\addtolength{\textheight}{7mm}

\setlength{\textfloatsep}{5pt}
\setlength{\floatsep}{2pt}
\setlength{\textfloatsep}{2pt}

\usepackage{amssymb,amsmath}
\setcounter{tocdepth}{3}
\usepackage{graphicx}
\usepackage{pifont}

\usepackage[polish,english]{babel}
\usepackage[T1,nomathsymbols]{polski}
\usepackage[utf8]{inputenc}
\usepackage[T1]{fontenc}

\usepackage{tikz}
\usetikzlibrary{arrows,backgrounds,automata,topaths,patterns,decorations.pathreplacing,decorations.pathmorphing}
\usetikzlibrary{calc}
\usetikzlibrary{fit}
\usetikzlibrary{arrows}

\usepackage[sort,numbers,sectionbib]{natbib} 

\newcommand{\notapp}{\emph{n/a}}
\newcommand{\fail}{\emph{fail}}
\newcommand{\timeout}{\emph{time out}}

\usepackage{url}
\urldef{\mailsa}\path|{alfred.hofmann, ursula.barth, ingrid.haas, frank.holzwarth,|
\urldef{\mailsb}\path|anna.kramer, leonie.kunz, christine.reiss, nicole.sator,|
\urldef{\mailsc}\path|erika.siebert-cole, peter.strasser, lncs}@springer.com|    
\newcommand{\keywords}[1]{\par\addvspace\baselineskip
\noindent\keywordname\enspace\ignorespaces#1}

\usepackage[sf,SF]{subfigure}

\newtheorem{mytheorems}{Theorem}
\newtheorem{myconditions}{Condition}

\newcommand{\VectorC}[2]{\left( \begin{array}{c} #1 \\ #2 \end{array} \right)}

\newcommand{\casei}{\operatorname{(i)}}
\newcommand{\caseii}{\operatorname{(ii)}}
\newcommand{\caseiii}{\operatorname{(iii)}}
\newcommand{\caseiv}{\operatorname{(iv)}}

\usepackage{xcolor}

\definecolor{DarkBlue}{RGB}{7,72,110}
\definecolor{LightBlue}{RGB}{205,237,249}
\definecolor{LightRed}{RGB}{255,102,102}
\definecolor{DarkRed}{RGB}{153,0,0}
\definecolor{LightGreen}{RGB}{178,205,102}
\definecolor{DarkGreen}{RGB}{0,51,0}

\newcommand{\Pred}[1]{\operatorname{#1}}

\newcommand{\Rel}{\operatorname{Rel}}

\newcommand{\Sym}{\operatorname{Sym}}
\newcommand{\Reals}{\operatorname{\mathbb{R}}}
\newcommand{\Let}{\operatorname{let}}

\setlength{\parindent}{0in}
\setlength{\parskip}{4pt}

\usepackage{hyperref}

\hypersetup{
    bookmarks=true,         
    pdftoolbar=true,        
    pdfmenubar=true,        
    pdffitwindow=false,     
    pdfstartview={FitH},    
    pdftitle={Spatial Symmetry Driven Pruning},    
    pdfauthor={Todo},     
    pdfnewwindow=true,      
    colorlinks=true,       
    linkcolor=blue,        
    linkbordercolor=white,
    citecolor=blue!50!black,        
    urlcolor=blue!80!black           
}

\newcommand{\bul}{$\triangleright~$}

\begin{document}

\mainmatter  

\title{Spatial Symmetry Driven Pruning Strategies for\\Efficient Declarative Spatial Reasoning}

\titlerunning{Spatial Symmetry Driven Pruning}

%
%
\author{ Carl Schultz\inst{1,3} \and Mehul Bhatt\inst{2,3} 
}
\authorrunning{}


\institute{Institute for Geoinformatics, University of M\"{u}nster, Germany
\and
Department of Computer Science, University of Bremen, Germany
\and
The DesignSpace Group
}


%
%

\toctitle{Spatial Symmetry Based Pruning}
\tocauthor{}
\maketitle

\begin{abstract}
Declarative spatial reasoning denotes the ability to (declaratively) specify and solve real-world problems related to geometric and qualitative spatial representation and reasoning within standard knowledge representation and reasoning (KR) based methods (e.g., logic programming and derivatives). One approach for encoding the semantics of spatial relations within a declarative programming framework is by systems of polynomial constraints. However, solving such constraints is computationally intractable in general (i.e. the theory of real-closed fields). 

\smallskip

We present a new algorithm, implemented within the declarative spatial reasoning system CLP(QS), that drastically improves the performance of deciding the consistency of spatial constraint graphs over conventional polynomial encodings. We develop pruning strategies founded on spatial symmetries that form equivalence classes (based on affine transformations) at the qualitative spatial level. Moreover, pruning strategies are themselves formalised as knowledge about the properties of space and spatial symmetries. We evaluate our algorithm using a range of benchmarks in the class of contact problems, and proofs in mereology and geometry. The empirical results show that CLP(QS) with knowledge-based spatial pruning outperforms conventional polynomial encodings by orders of magnitude, and can thus be applied to problems that are otherwise unsolvable in practice.

\keywords{Declarative Spatial Reasoning, Geometric Reasoning, Logic Programming, Knowledge Representation and Reasoning}
\end{abstract}

 
\section{Introduction}
Knowledge representation and reasoning (KR) about \emph{space} may be formally interpreted within diverse frameworks such as: (a) analytically founded geometric reasoning \& constructive (solid) geometry \citep{Kapur:1989:GR:107262,pesant1999reasoning,owen1991algebraic}; (b) relational algebraic semantics of `qualitative spatial calculi' \citep{ligozat-book}; and (c) by axiomatically constructed formal systems of mereotopology and mereogeometry \citep{hdbook-spatial-logics}. Independent of formal semantics, commonsense spatio-linguistic abstractions offer a human-centred and cognitively adequate mechanism for logic-based automated reasoning about spatio-temporal information \citep{Bhatt-Schultz-Freksa:2013}. 


\noindent\bul\textbf{Declarative Spatial Reasoning}\quad In the recent years, \emph{declarative spatial reasoning} has been developed as a high-level commonsense spatial reasoning paradigm aimed at (declaratively) specifying and solving real-world problems related to geometric and qualitative spatial representation and reasoning \cite{bhatt-et-al-2011}. A particular manifestation of this paradigm is the constraint logic programming based  CLP(QS) spatial reasoning system \cite{bhatt-et-al-2011,schultz-bhatt-2012,DBLP:conf/ecai/SchultzB14} (Section \ref{sec:dsr}).

\medskip

\noindent\bul\textbf{Relational Algebraic Qualitative Spatial Reasoning}\quad The state of the art in qualitative spatial reasoning using relational algebraic methods \citep{ligozat-book} has resulted in prototypical algorithms and black-box systems that do not integrate with KR languages, such as those dealing with semantics and conceptual knowledge necessary for handling {background knowledge}, action \& change, relational learning, rule-based systems etc. Furthermore, relation algebraic qualitative spatial reasoning (e.g. LR \cite{ligozat1993qualitative}), while efficient, is incomplete in general \cite{ligozat-book,ladkin1994binary,leecomplexity-ecai2014}.\footnote{\emph{Incompleteness} refers to the inability of a spatial reasoning method to determine whether a given network of qualitative spatial constraints is consistent or inconsistent in general. Relation-algebraic spatial reasoning (i.e. using algebraic closure based on weak composition) has been shown to be incomplete for a number of spatial languages and cannot guarantee \emph{consistency} in general, e.g. relative directions \cite{leecomplexity-ecai2014} and containment relations between linearly ordered intervals \cite{ladkin1994binary}, Theorem 5.9.} Alternatively, constraint logic programming based systems such as CLP(QS)  \cite{bhatt-et-al-2011} and others (see \cite{pesant1994quad,bouhineau1996solving,pesant1999reasoning,bouhineau1999application,DBLP:conf/pods/KanellakisKR90,Grumbach97}) adopt an analytic geometry approach where spatial relations are encoded as systems of polynomial constraints;\footnote{We emphasise that this analytic geometry approach that we also adopt is not \emph{qualitative spatial reasoning} in the relation algebraic sense; the foundations are similar (i.e. employing a finite language of spatial relations that are interpreted as infinite sets of configurations, determining consistency in the complete absence of numeric information, and so on) but the methods for determining consistency etc. come from different branches of spatial reasoning.} while these methods are sound and complete (see Section\;\ref{sec:foundations}), they have prohibitive computational complexity, $O(c_1^{c_2^n})$ in the number of polynomial variables $n$, meaning that even relatively simple problems are not solved within a practical amount of time via ``naive'' or direct encodings as polynomial constraints, i.e. encodings that lack common-sense knowledge about spatial objects and relations. On the other hand, highly efficient and specialised geometric theorem provers (e.g. \cite{chou1988mechanical}) and geometric constraint solvers (e.g. \cite{owen1991algebraic,hadas2000role}) exist. However, these provers exhibit highly specialised and restricted spatial languages\footnote{Standard geometric constraint languages of approaches including \cite{chou1988mechanical,owen1991algebraic,hadas2000role} consist of points, lines, circles, ellipses, and coincidence, tangency, perpendicularity, parallelism, and numerical dimension constraints; note the absence of e.g. mereotopology and ``common-sense'' relative orientation relations \cite{Schultz-Bhatt-Borrmann-EGICE2014}.} and lack (a) the direct integration with more general AI methods and (b) the capacity for incorporating modular common-sense rules about space in an extensible domain- and context-specific manner.

The aims and contributions of the research presented in this paper are two-fold:

\begin{enumerate}

	\item to further develop a KR-centered declarative spatial reasoning paradigm such that spatial reasoning capabilities are available and accessible within AI programs and applications areas, and may be seamlessly integrated with other AI methods dealing with representation, reasoning, and learning about non-spatial aspects
	
	\item to demonstrate that in spite of high computational complexity in a general and domain-independent case, the power of analytic geometric ---in particular polynomial systems for encoding the semantics of spatial relations-- can be exploited by systematically utilising commonsense knowledge about spatial object and relationships at the qualitative level.
\end{enumerate}

We present a new algorithm that drastically improves analytic spatial reasoning performance within KR-based declarative spatial reasoning approaches by identifying and pruning spatial symmetries that form equivalence classes (based on affine transformations) at the qualitative spatial level. By exploiting symmetries our approach utilises powerful underlying, but computationally expensive, polynomial solvers in a significantly more effective manner. Our algorithm is simple to implement, and enables spatial reasoners to solve problems that are otherwise unsolvable using analytic or relation algebraic methods. We emphasise that our approach is independent of any particular polynomial constraint solver; it can be similarly applied over a range of solvers such as CLP(R), SMTs, and specialised geometric constraint solvers that have been integrated into a KR framework. 

In addition to AI / commonsense reasoning applications areas such as design, GIS, vision, robotics \citep{KR-2014-Bhatt,Bhatt-Wallgruen-2014,Bhatt2013-CMN,Bhatt-Schultz-Freksa:2013},  we also address application into automating support for proving the validity of theorems in mereotopology, orientation, shape, etc. (e.g. \cite{tarski1956general,borgo2013spheres}). Building on such foundational capabilities, another outreach is in the area of computer-aided learning systems in mathematics (e.g. at a high-school level). For instance, consider Proposition 9, Book I of Euclid's \emph{Elements}, where the task is to bisect an angle using only an unmarked ruler and collapsable compass. Once a student has developed what they believe to be a constructive proof, they can employ declarative spatial reasoners to formally verify that their construction applies to all possible inputs (i.e. all possible angles) and manipulate an interactive sketch that maintains the specified spatial relations (i.e. dynamic geometry \cite{hadas2000role}). A further area of interest is verifying the entries of composition tables that are used in relation algebraic qualitative spatial reasoning \cite{DBLP:conf/cade/RandellCC92}: given spatial objects $a,b,c \in U$, composition ``look up'' tables are indexed by pairs of (base) relations $R_{1_{ab}}$, $R_{2_{bc}}$ and return disjunctions of possible (base) relations $R_{3_{ac}}$. For each entry, the task is to prove $\exists a,b,c \in U \big( R_{1_{ab}} \wedge R_{2_{bc}} \wedge R_{3_{ac}} \big) $ for only those base relations $R_3$ in the entry's disjunction.


\section{Declarative Spatial Reasoning with CLP(QS)}\label{sec:dsr}
Declarative spatial reasoning denotes the ability of declarative programming frameworks in AI to handle \emph{spatial objects} and the \emph{spatial relationships} amongst them as \emph{native} entities, e.g., as is possible with concrete domains of Integers, Reals and Inequality relationships. The objective is to enable points, oriented points, directed line-segments, regions, and topological and orientation relationships amongst them as \emph{first-class} entities within declarative frameworks in AI \cite{bhatt-et-al-2011}.

\subsection{Examples of Declarative Spatial Reasoning with CLP(QS)}      
With a focus on spatial question-answering, the CLP(QS) spatial reasoning system \cite{bhatt-et-al-2011,schultz-bhatt-2012,DBLP:conf/ecai/SchultzB14} provides a practical manifestation of certain aspects of the declarative spatial reasoning paradigm in the context of constraint logic programming (CLP). \footnote{Spatial Reasoning (CLP(QS)). \texttt{www.spatial-reasoning.com}} CLP(QS) utilises a high-level language of spatial relations and commonsense knowledge about how various spatial domains behave. Such relations describe sets of object configurations, i.e. qualitative spatial relations such as \emph{coincident}, \emph{left of}, or \emph{partially overlapping}. Through this deep integration of spatial reasoning with KR-based frameworks, the long-term research agenda is to seamlessly provide spatial reasoning in other AI tasks such as planning, non-monotonic reasoning, and ontological reasoning \cite{bhatt-et-al-2011}. What follows is a brief illustration of the spatial Q/A capability supported by CLP(QS).

\textbf{EXAMPLE A}.\quad \emph{Massachusetts Comprehensive Assessment System (MCAS)}

\textbf{Grade 3 Mathematics (2009), Question 12.} \emph{Put a square and two right-angled triangles together to make a rectangle.} \emph{(1) Put the shapes $T_1, T_2, S$ illustrated in Figure \ref{fig:mcas-q12-shapes} together to make a rectangle.} \emph{(2) Put the shapes $T_1, T_2, S$ in Figure \ref{fig:mcas-q12-shapes}  together to make a quadrilateral that is not a rectangle}.

CLP(QS) represents right-angle triangles as illustrated in Figure \ref{fig:clpqs-ra-triangles}. Figures \ref{fig:mcsa-clpqs-a} and \ref{fig:mcsa-clpqs-b} present the CLP(QS) solutions.

 \begin{figure}
        
        \subfigure[CLP(QS) solution to arranging $T_1, T_2, S$ to form rectangles.]{
            \includegraphics[width=0.5\textwidth]{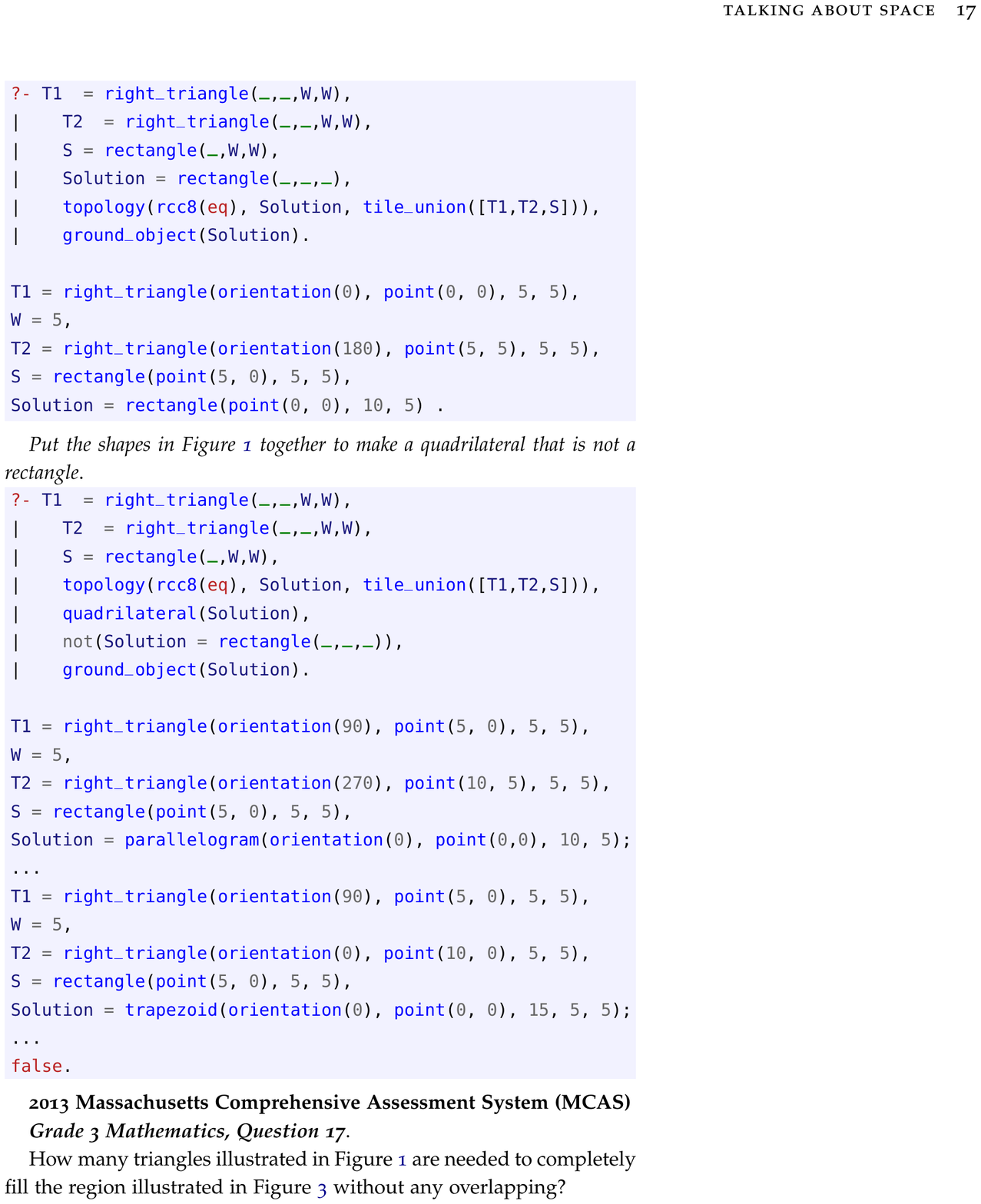}
            \label{fig:mcsa-clpqs-a}
        }
        \quad\quad\quad
        \subfigure[Representing right-angle triangle $T$ in CLP(QS).]{
            \includegraphics[width=0.4\textwidth]{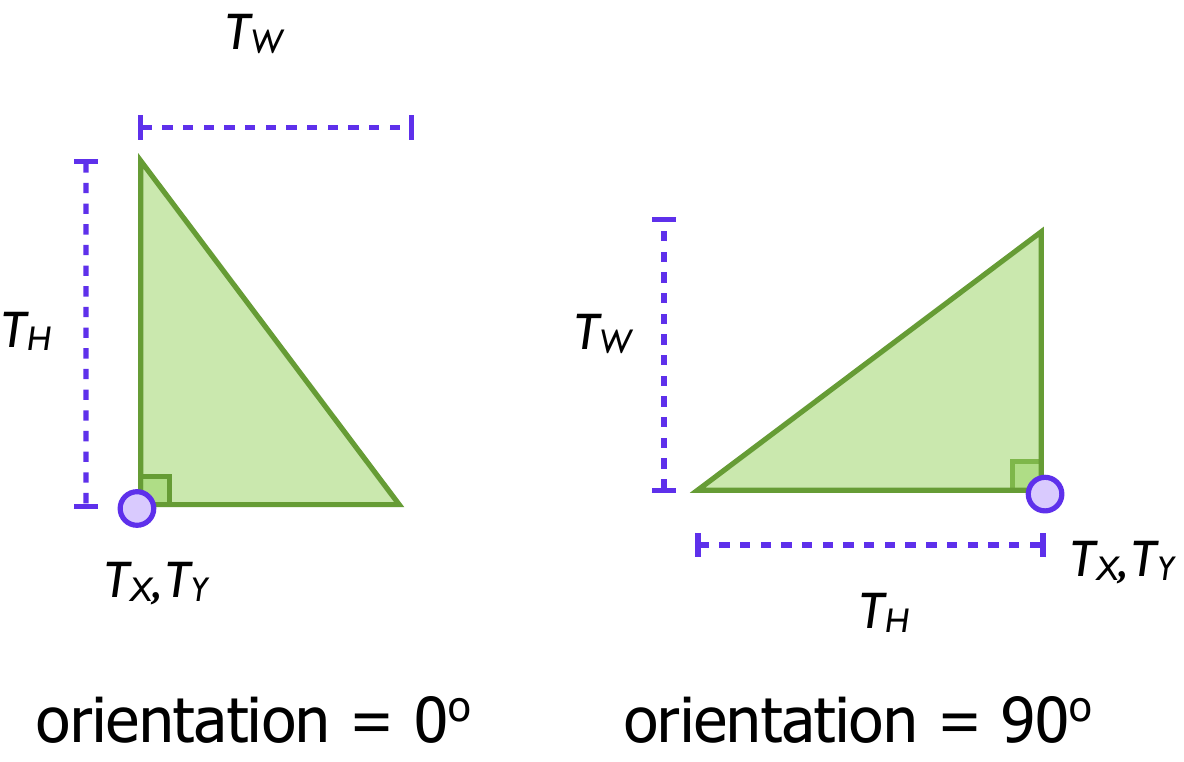}
            \label{fig:clpqs-ra-triangles}
        }

        \subfigure[CLP(QS) solution to arranging $T_1, T_2, S$ to form non-rectangular quadrilaterals.]{
            \includegraphics[width=0.5\textwidth]{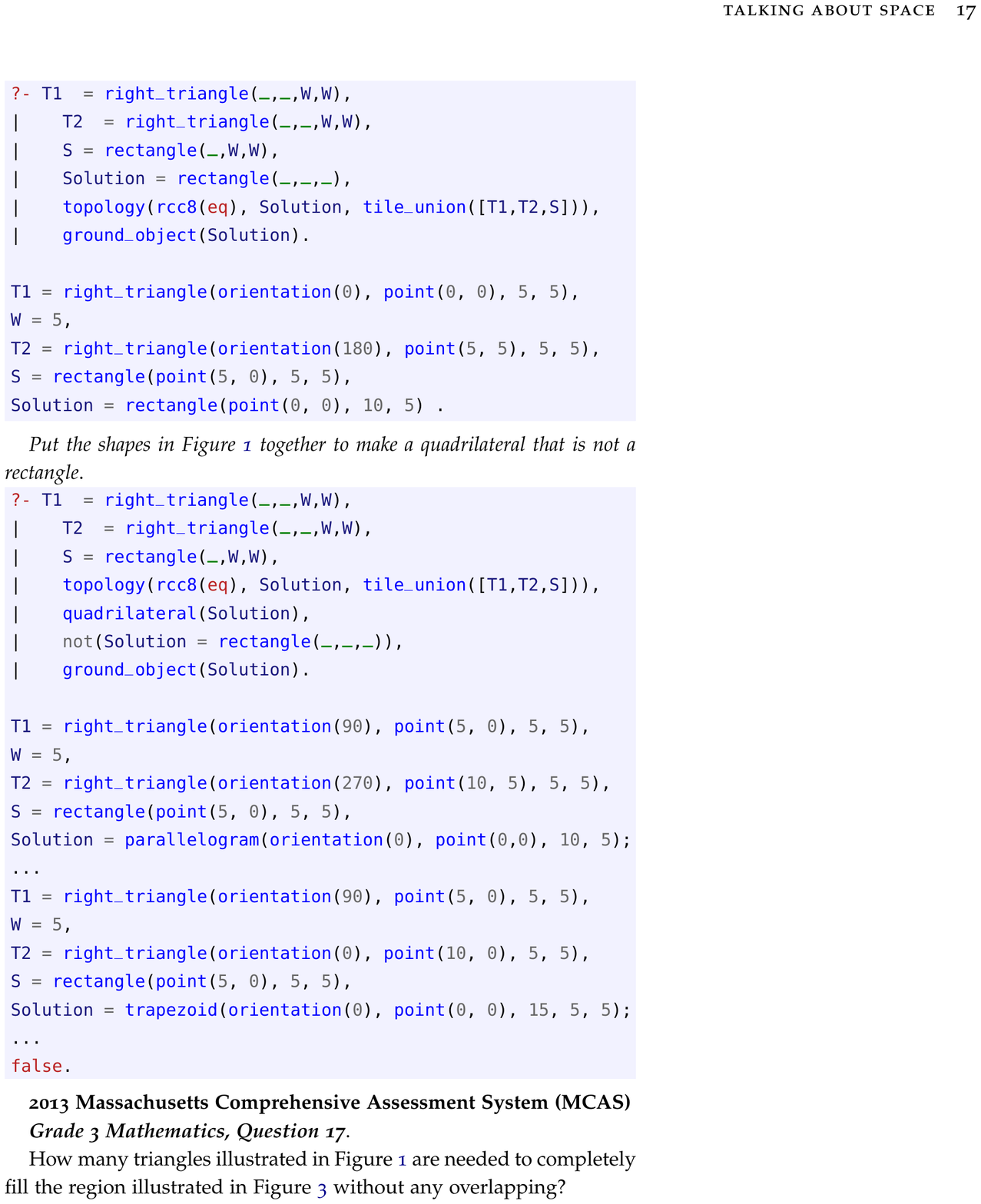}
            \label{fig:mcsa-clpqs-b}
        }
        \quad\quad\quad\quad\quad\quad
        \subfigure[Shapes $T_1, T_2, S$ to be arranged and CLP(QS) solutions.]{
            \includegraphics[width=0.25\textwidth]{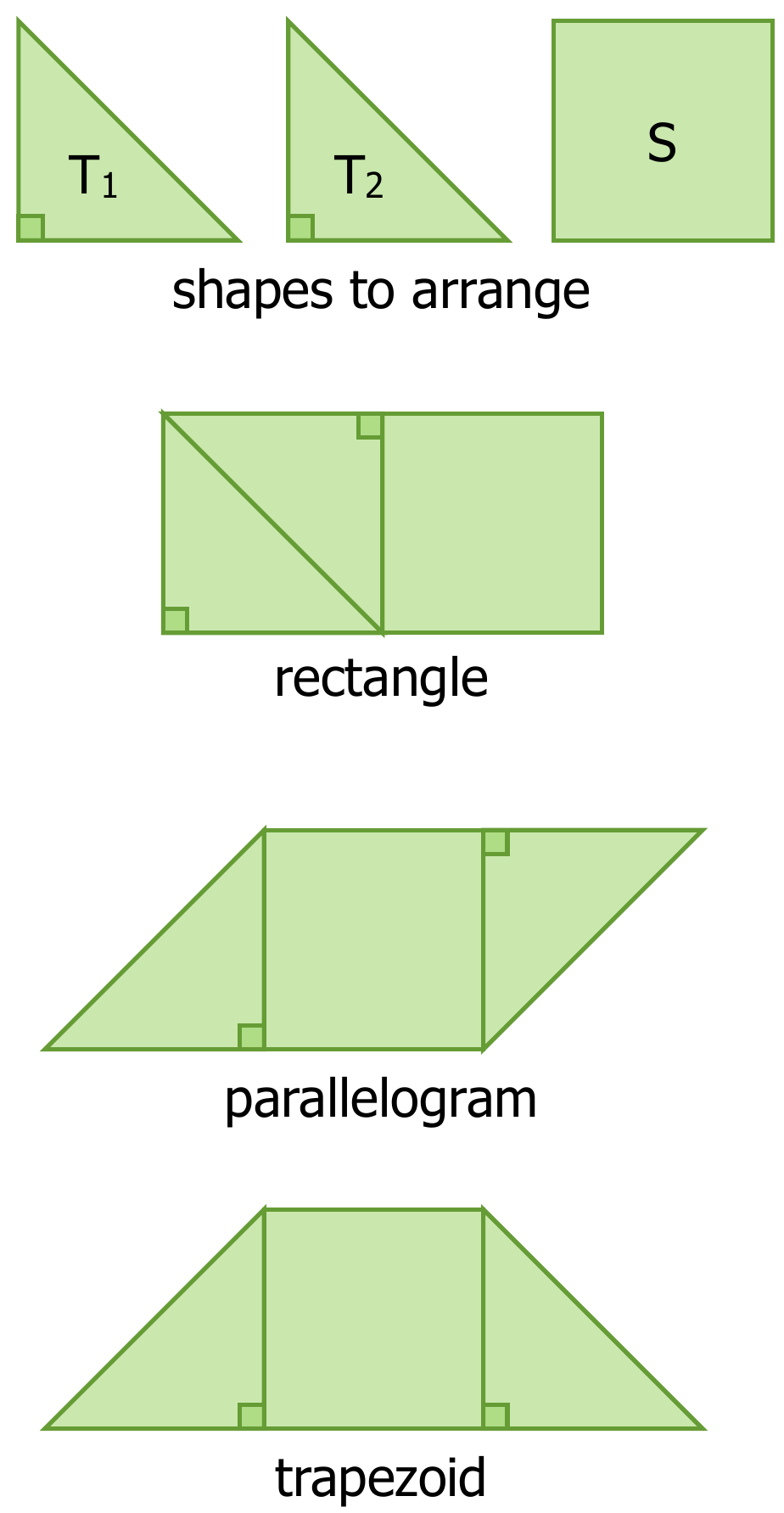}
            \label{fig:mcas-q12-shapes}
        }

                \caption{Using CLP(QS) to solve MCAS Grade 3 Mathematics Test questions (2009).}

    \end{figure}

\medskip

\textbf{Grade 3 Mathematics (2013), Question 17.} \emph{(1) How many copies of $T_1$ illustrated in Figure \ref{fig:mcas-q12-shapes} are needed to completely fill the region $R$ illustrated in Figure \ref{fig:mcsa-17-region} without any of them overlapping?}

As presented in Figure \ref{fig:mcsa-carlcarl}, CLP(QS) solves both the geometric definition and a variation where the dimensions of the rectangle and triangles are not given.

 \begin{figure}

        \subfigure[CLP(QS) solution to tiling a rectangular \newline region with triangle $T_1$.]{
            \includegraphics[width=0.5\textwidth]{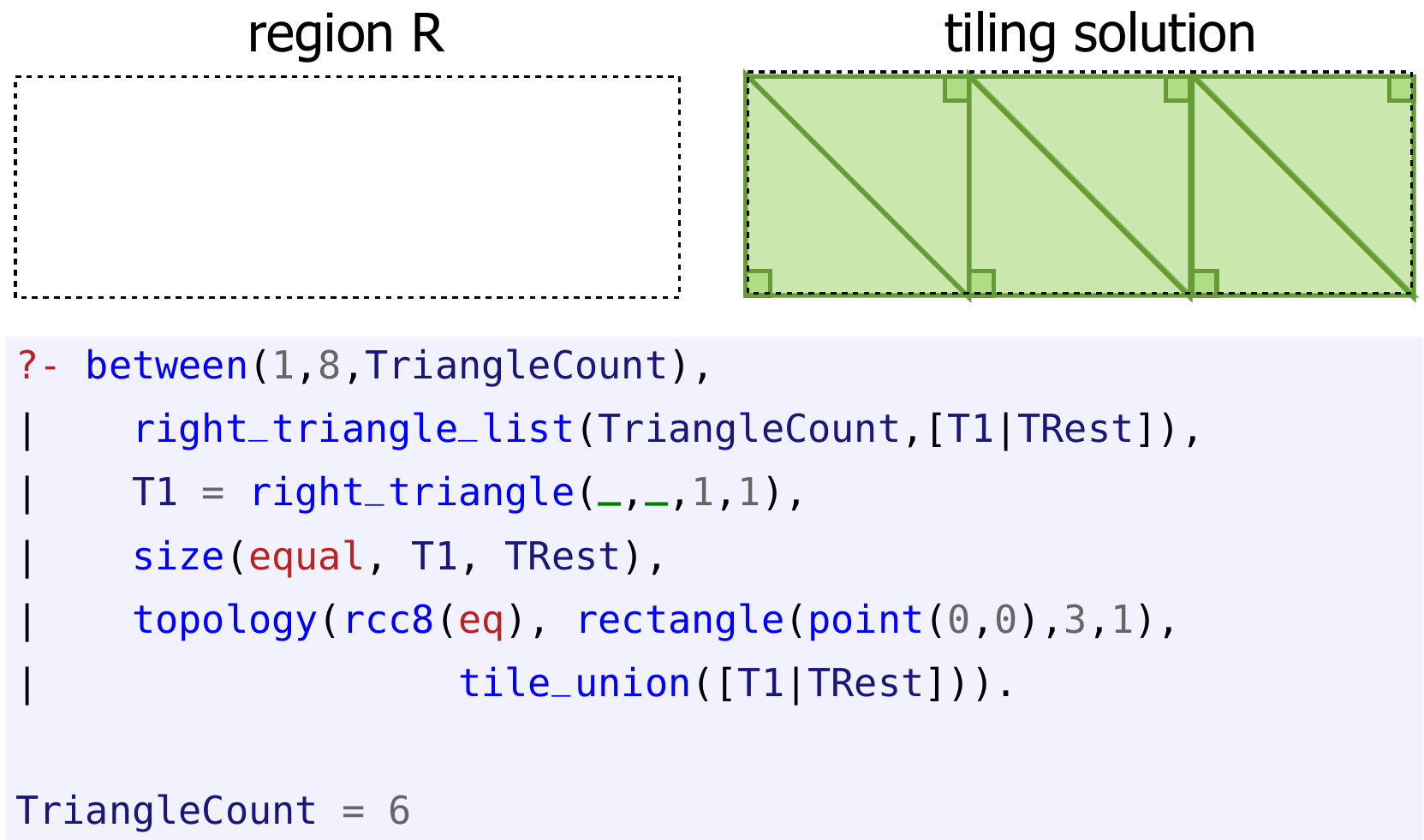}
            \label{fig:mcsa-17-region}
        }
        \subfigure[When no geometric information is given, CLP(QS) determines that the number of right-angle triangles must be even.]{
            \includegraphics[width=0.5\textwidth]{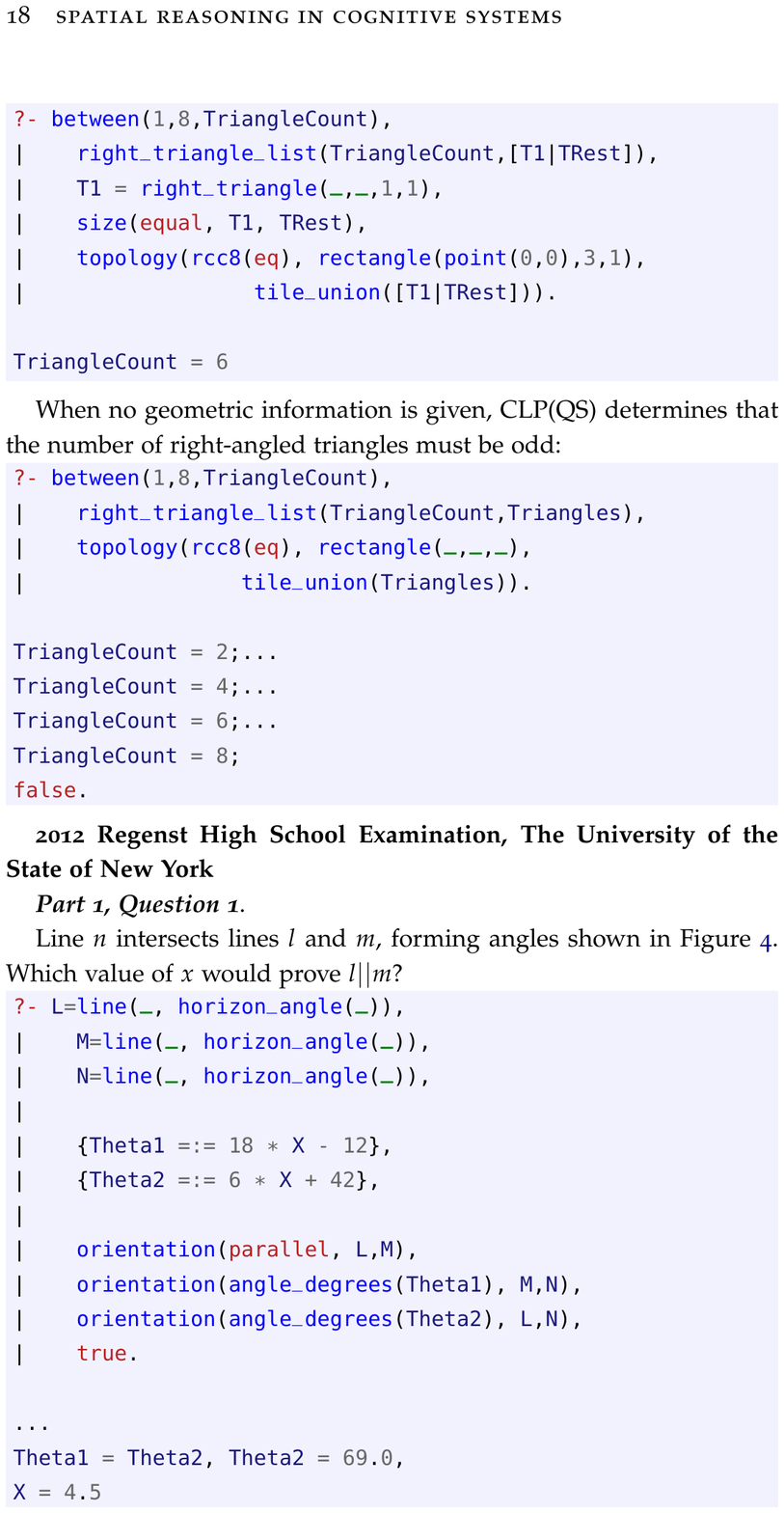}
        }

                \caption{Using CLP(QS) to solve MCAS Grade 3 Mathematics Test questions (2013).}
                \label{fig:mcsa-carlcarl}
    \end{figure}

 \begin{figure}
        \subfigure[CLP(QS) solution and its corresponding inferred \newline geometric constraints.]{
            \includegraphics[width=0.49\textwidth]{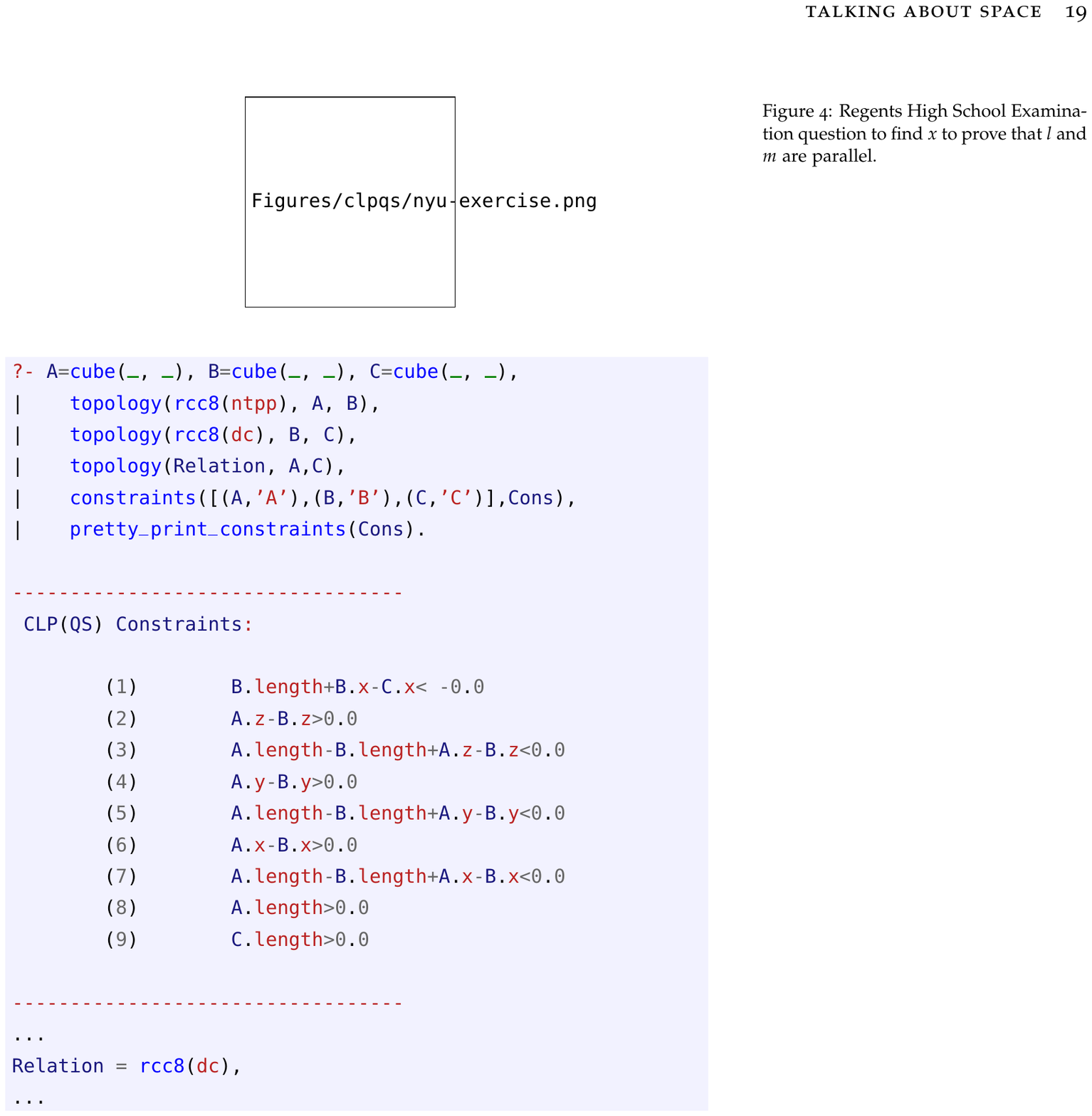}
            \label{fig:qsr-unknowns-code}
        }
        \subfigure[Geometric inequalities provided by CLP(QS) that correspond to the qualitative constraints.]{
            \includegraphics[width=0.5\textwidth]{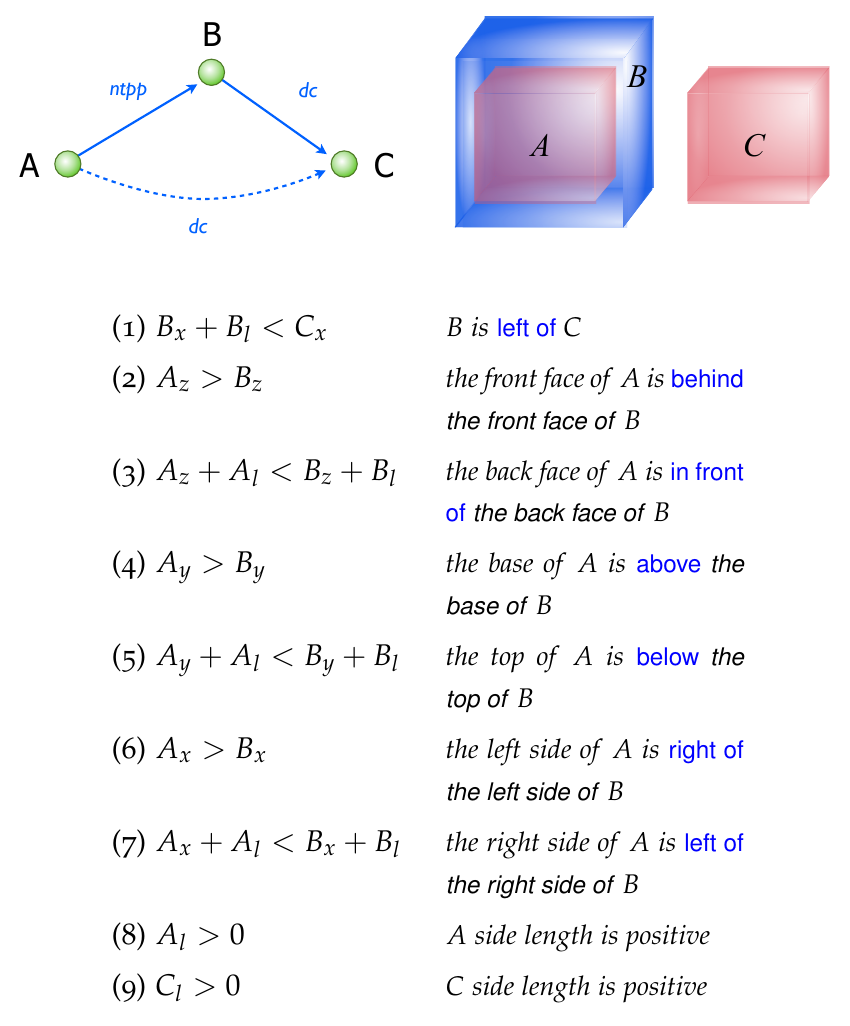}
        
        }

                \caption{Spatial reasoning about cubes $A$,$B$,$C$ with complete geometric unknowns.}
                \label{fig:qsr-unknowns}
    \end{figure}

\textbf{EXAMPLE B}.\quad \emph{Qualitative Spatial Reasoning with Complete Unknowns}

In this example CLP(QS) reasons about spatial objects based solely on given qualitative spatial relations, i.e. without any geometric information. 

\emph{Define three cubes $A, B, C$. Put $B$ inside $A$, and make $B$ disconnected from $C$. What spatial relations can possibly hold between $A$ and $C$?}

CLP(QS) determines that $A$ must be \emph{disconnected} from $C$ and provides the inferred corresponding geometric constraints, as illustrated in Figure \ref{fig:qsr-unknowns}.

\subsection{Analytical Geometry Foundations for Declarative Spatial Reasoning}
\label{sec:foundations} 

Analytic geometry methods parameterise classes of objects and encode spatial relations as systems of polynomial equations and inequalities \cite{chou1988mechanical}. For example, we can define a sphere as having a 3D centroid point $(x,y,z)$ and a radius $r$, where $x,y,z,r$ are reals. Two spheres $s_1, s_2$ \emph{externally connect} or \emph{touch} if
\begin{equation}
	(x_{s_1} - x_{s_2})^2 + (y_{s_1} - y_{s_2})^2 + (z_{s_1} - z_{s_2})^2 = (r_{s_1} + r_{s_2})^2
	\label{eqn:sphere-touch}
\end{equation}

If the system of polynomial constraints is satisfiable then the spatial constraint graph is consistent. Specifically, the system of polynomial (in)equalities over variables $X$ is satisfiable if there exists a real number assignment for each $x \in X$ such that the (in)equalities are \emph{true}. Partial geometric information (i.e. a combination of numerical and qualitative spatial information) is utilised by assigning the given real numerical values to the corresponding object parameters. Thus, we can integrate spatial reasoning and logic programming using \emph{Constraint Logic Programming} (CLP) \cite{jaffar1994constraint}; this system is called CLP over qualitative spatial domains. CLP($\mathcal{QS}$), provides a suitable framework for expressing and proving first-order spatial theorems.

Cylindrical Algebraic Decomposition (CAD) \cite{collins1975quantifier} is a prominent sound and complete algorithm for deciding satisfiability of a general system of polynomial constraints over reals and has time complexity $O(c_1^{c_2^n})$ in the number of free variables \cite{arnon1984cylindrical}. Thus, a key focus within analytic spatial reasoning has been methods for managing this inherent intractibility.\footnote{Important factors in determining the applicability of various analytic approaches are the degree of the polynomials (particularly the distinction between linear and nonlinear) and whether both equality and inequalities are permitted in the constraints.} More efficient refinements of the original CAD algorithm include partial CAD \cite{Collins1991299}. Symbolic methods for solving systems of multivariate \emph{equations} include the Gr\"{o}bner basis method \cite{buchberger2006bruno} and Wu's characteristic set method \cite{wenjun1984basic}. In the QUAD-CLP(R) system, the authors improve solving performance by using linear approximations of quadratic constraints and by identifying geometric equivalence classes \cite{pesant1994quad}. Ratschan employs pruning methods, also at the polynomial level, in the \emph{rsolve} system \cite{ratschan2002approximate,ratschan2006efficient}. 


Constructive and iterative (i.e. Newton and Quasi-Newton iteration) methods solve spatial reasoning problems by ``building'' a solution, i.e. by finding a configuration that satisfies the given constraints \cite{owen1991algebraic}. If a solution is found, then the solution itself is the proof that the system is consistent -- but what if a solution is not found within a given time frame? In general these methods are incomplete for spatial reasoning problems encoded as nonlinear equations and inequalities of arbitrary degree.\footnote{That is, constructive methods may fail in building a consistent solution, and iterative root finding methods may fail to converge.}

\section{Spatial Symmetries}
Information about \emph{objects} and their \emph{spatial relations} is formally expressed as a constraint graph $G = (N,E)$, where the nodes $N$ of the graph are spatial objects and the edges between nodes specify the relations between the objects. Objects belong to a \emph{domain}, e.g. points, lines, squares, and circles in 2D Euclidean space, and cuboids, vertically-extruded polygons, spheres, and cylinders in 3D Euclidean space. We denote the object domain of node $i$ as $U_i$ (spatial domains are typically infinite). A node may refer to a partially ground, or completely geometrically ground object, such that $U_i$ can be a proper subset of the full domain of that object type. Each element $i^\prime \in U_i$ is called an \emph{instance} of that object domain. A \emph{configuration} of objects is a set of instances $\{i^\prime_1,\dots,i^\prime_n\}$ of nodes $i_1,\dots,i_n$ respectively.

A binary relation $R_{i j}$ between nodes $i,j$ distinguishes a set of relative configurations of $i,j$; relation $R$ is said to \emph{hold} for those configurations, $R_{i j} \subseteq U_i \times U_j$. In general, an $n$-ary relation for $n \geq 1$ distinguishes a set of configurations between $n$ objects:  $R_{i_1,\dots,i_n} \subseteq U_{i_1} \times \dots \times U_{i_n}$. 

An edge between nodes $i,j$ is assigned a logical formula over relation symbols $R_1,\dots,R_m$ and logical operators $\vee, \wedge, \neg$. Given an interpretation $i^\prime, j^\prime$, the formula for edge $e$ is interpreted in the standard way, denoted $e(i^\prime,j^\prime)$: 

\begin{itemize}
	\item $R_1 \equiv (i^\prime,j^\prime) \in R_{1_{i j}}$
	\item $(R_1 \vee R_2) \equiv (i^\prime,j^\prime) \in R_{1_{i j}} \cup R_{2_{i j}}$
	\item $(R_1 \wedge R_2) \equiv (i^\prime,j^\prime) \in R_{1_{i j}} \cap R_{2_{i j}}$
	\item $(\neg R_1) \equiv (i^\prime,j^\prime) \in (U_i \times U_j) \setminus R_{1_{i j}}$
\end{itemize}

An edge between $i,j$ is \emph{satisfied} by a configuration $i^\prime, j^\prime$ if $e(i^\prime, j^\prime)$ is \emph{true} (this is generalised to $n$-ary relations). A spatial constraint graph $G=(N,E)$ is \emph{consistent} or \emph{satisfiable} if there exists a configuration $s$ of $N$ that satisfies all edges in $E$, denoted $G(s)$; this is referred to as the \emph{consistency task}. Graph $G^\prime$ is a \emph{consequence} of, or \emph{implied} by, $G$ if every spatial configuration that satisfies $G$ also satisfies $G^\prime$. This is the \emph{sufficiency task} (or \emph{entailment}) that we commonly apply to constructive proofs, where the task is to prove that objects and relations in $G$ are sufficient for ensuring that particular properties hold in $G^\prime$.

Given graph $G$, two key questions are (1) how to give meaning, or interpret, the spatial relations in $G$, and (2) how to efficiently determine consistency and produce instantiations of $G$. That is, we need to adopt a method for \emph{spatial reasoning}. 



\subsection{An Example of the Basic Concept}

A key insight is that spatial configurations form equivalence classes over qualitative relationships based on certain affine transformations. For example, consider the spatial task of determining whether five same-sized spheres can be mutually touching. Suppose we are given a specific numerically defined configuration of four mutually touching spheres as illustrated in Figure \ref{fig:spheres-eg-config}, and we prove that it is impossible to add an additional mutually touching sphere to this configuration. That is, let $s_1, \dots, s_4$ be unit spheres (radius $1$), centred on points $p_1 = (0,0,0)$, $p_2=(2,0,0)$, $ p_3=(1, \sqrt{3},0)$, $p_4=(1,\sqrt{\frac{1}{3}},\sqrt{\frac{8}{3}})$, respectively. According to Equation~\ref{eqn:sphere-touch}, $s_1, \dots, s_4$ are mutually touching. We prove that a fifth same-sized, mutually touching sphere cannot be added to this configuration by determining that the corresponding system of polynomial constraints  is unsatisfiable (the system consists of four constraints with three free variables $x_{s_5}, y_{s_5}, z_{s_5}$, by reapplying Equation \ref{eqn:sphere-touch} between $s_5$ and each other sphere, e.g. $s_1$ \emph{touches} $s_5$ is $ x_{s_5}^2 + y_{s_5}^2 + z_{s_5}^2 = 4$).

\begin{figure}[t]
\centering
\subfigure[initial configuration]{
	\includegraphics[height=0.2\columnwidth]{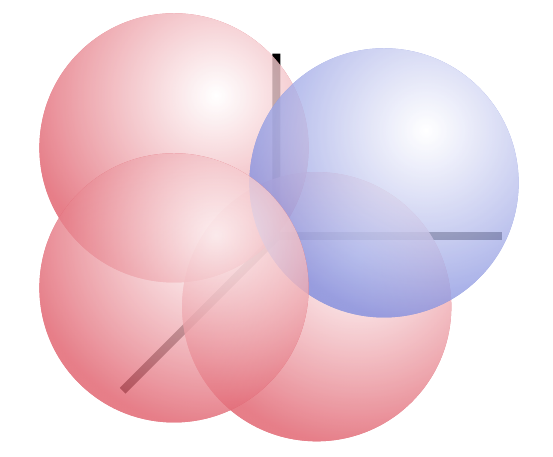}
	\label{fig:spheres-eg-config}
}\quad
\subfigure[rotation]{
	\includegraphics[height=0.2\columnwidth]{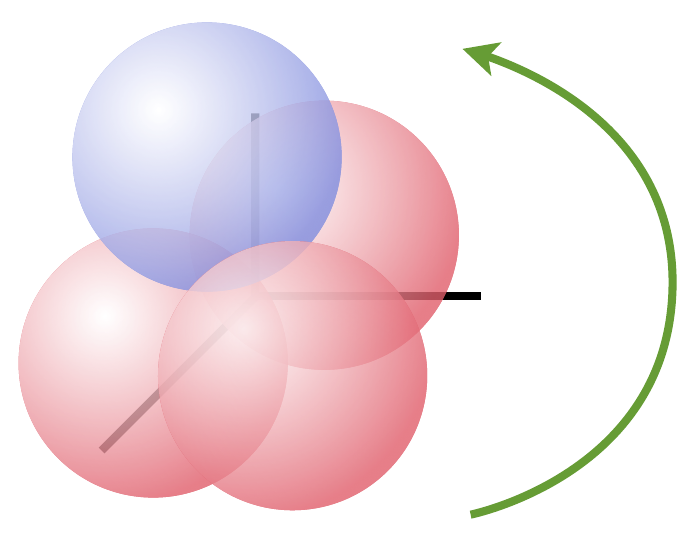}\
	\label{fig:spheres-eg-rotation}
}\quad
\subfigure[translation]{
	\includegraphics[height=0.2\columnwidth]{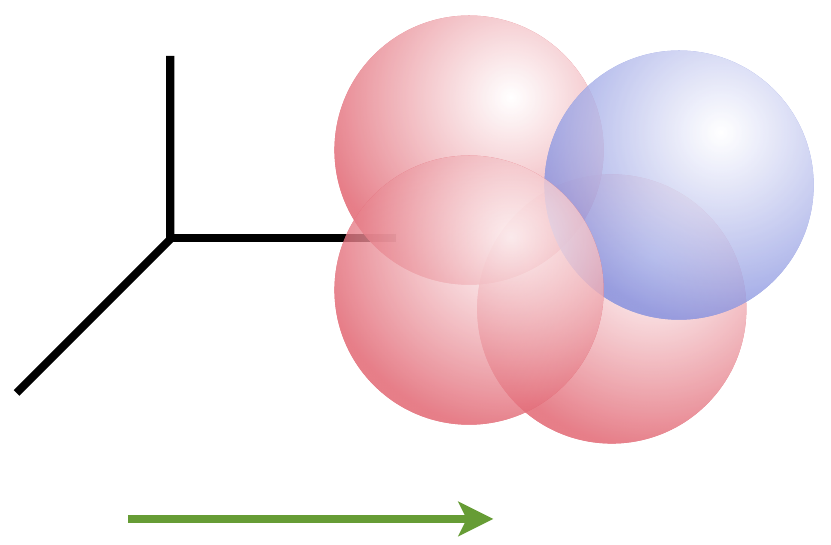}
	\label{fig:spheres-eg-translation}
}
\caption{Topological relations between four spheres maintained after various affine transformations.}
\label{fig:spheres-eg}
\end{figure}

Now consider that we apply an affine transformation to the original configuration such as rotation, translation, scaling, or reflection, as illustrated in Figures \ref{fig:spheres-eg-rotation} and \ref{fig:spheres-eg-translation}. After having applied the transformation, it is still impossible to add a fifth mutually touching sphere, because the relevant qualitative (topological) relations are \emph{preserved} under these transformations. Thus, when we proved that it was impossible to add a fifth same-sized mutually touching sphere to the original given configuration, in fact we proved it for a \emph{class} of configurations, specifically, the class of configurations that can be reached by applying an affine transformation to the original configuration. Now, when determining consistency of graphs of qualitative spatial relations, we are not given any specific spatial configurations to work with (i.e. complete absence of numerical information), and instead need to prove consistency over all possible configurations. 

The key is that, each time we ground and constrain variables, we are eliminating a \emph{spatial symmetry} from our partially defined configuration. If we maintain knowledge about symmetries that certain object types have (e.g. spheres have complete rotational symmetry) then we can judiciously ``trade'' symmetries for unbound variables in our polynomial encoding at a purely symbolic level. Importantly, rather than having to compute symmetries or undertake any complex symmetry detection procedure, we are instead \emph{building knowledge about space and spatial properties of objects into the spatial solver at a declarative level}. Thus, we are able to efficiently reason over an infinite set of possible configurations by incrementally pruning spatial symmetries based on commonsense knowledge about space, and this pruning is exploited by eliminating and constraining variables in the underlying polynomial encoding. 


\subsection{Theoretical Foundations for Symmetries}
\label{sec:theoretical-foundations}


Due to the parameterisation of objects, spatial configurations are embedded in $n$-dimensional Euclidean space $\Reals^n$ ($1 \leq n \leq 3$) with a fixed origin point. Let $V,W$ be Euclidean spaces in $\Reals^n$, each with an origin. Given vectors $x,y$ and constant $k$, a linear transformation $f$ is a mapping $V \rightarrow W$ such that 

$f(x +y) = f(x) + f(y)$ $~~~~~$ \emph{(additive)}

$f(kx) = k f(x)$ $~~~~~~~~~~~~~~~~$ \emph{(homogeneous)}

An affine transformation $f^\prime$ is a linear transformation composed with a translation. It is convenient to represent a linear transformation on vector $x$ as a left multiplication of a $d \times d$ real matrix $Q$, and translation as an addition of vector $t$, $f^\prime(x) = Qx + t$. We denote a transformation $T$ applied to a spatial configuration of objects $s$ as $Ts$.

We distinguish particular classes of transformations with respect to the qualitative spatial relationships that are preserved, for example, in $\Reals^2$ the following matrices represent rotation by $\theta$, uniform scaling by $k > 0$, and horizontal reflection, respectively: 

$\left( \begin{array}{rr}
cos(\theta) & -sin(\theta) \\
sin(\theta) & cos(\theta) \end{array} \right)$,
$\left( \begin{array}{rr}
k & 0 \\
0 & k \end{array} \right)$,
$\left( \begin{array}{rr}
-1 & 0 \\
0 & 1 \end{array} \right)$.


Given transformation $T$ we annotate it with its type $c \in C$, e.g. $C=\{$translate, rotate, scale, reflect$\}$  as $T^c$. Each spatial relation $R$ belongs to a class of relations in $\Rel$, such as \emph{topology, mereology, coincidence, relative orientation, distance}.  Let $\Sym$ be a function $\Sym: \Rel \rightarrow 2^C$ that represents the classes of transformations that preserve a given class of spatial relations. The $\Sym$ function is our mechanism for building knowledge about spatial symmetries into the spatial reasoning system. Let $\Rel_G$ be the set of classes of the spatial relations that are used in the spatial constraint graph $G$, and let $\Sym_G = \bigcap_{R \in \Rel_G} \Sym(R).$

The following formal Condition on $\Sym_G$ states that transformations (applied to the embedding space) define equivalence classes of configurations with respect to the consistency of spatial constraint graphs. When satisfied, this condition provides a theoretically sound foundation for symmetry pruning. 


\begin{myconditions}\label{cond:symcons}
Given spatial constraint graph $G$, configuration $s$, and affine transformation $T^c$ with $c \in \Sym_G$  then $G(s)$ is true if and only if $G(T^cs)$. 
\end{myconditions}

\subsection{Polynomial Encodings for Spatial Relations}

In this section we define a range of spatial domains and spatial relations with the corresponding polynomial encodings. While our method is applicable to a wide range of 2D and 3D spatial objects and qualitative relations, for brevity and clarity we primarily focus on a 2D spatial domain. Our method is readily applicable to other 2D and 3D spatial domains and qualitative relations, for example, as defined in \cite{pesant1994quad,bouhineau1996solving,pesant1999reasoning,bouhineau1999application,bhatt-et-al-2011,schultz-bhatt-2012,DBLP:conf/ecai/SchultzB14}.

\begin{itemize}
	\item a \emph{point} is a pair of reals $x,y$
	\item a \emph{line segment} is a pair of end points $p_1, p_2$ ($p_1 \neq p_2$)
	\item a \emph{rectangle} is a point $p$ representing the bottom left corner, a unit direction vector $v$ defining the orientation of the base of the rectangle, and a real width and height $w,h$ ($0 < w, 0 < h$); we can refer to the vertices of the rectangle: let $v^\prime = (-y_v, x_v)$ be $v$ rotated $90^o$ counter-clockwise,  then $p_1 = p = p_5, p_2 = wv + p_1, p_3 = wv + hv^\prime + p_1, p_4 = hv^\prime + p_1$
	\item a \emph{circle} is a centre point $p$ and a real radius $r$ ($0 < r$)
\end{itemize}

\medskip

\noindent We consider the following spatial relations: 

\smallskip

\noindent \emph{Relative Orientation.} Left, right, collinear orientation relations between points and segments, and parallel, perpendicular relations between segments \cite{leecomplexity-ecai2014}.

\smallskip

\noindent \emph{Coincidence.} Intersection between a point and a line, and a point and the boundary of a circle. Also whether the point is in the interior, outside or on the boundary of a region.

\smallskip

\noindent \emph{Mereology.} Part-whole relations between regions \cite{varzi1996parts}.

\bigskip 

\noindent Table \ref{tab:encodings} presents the corresponding polynomial encodings. Given three real variables $v, i, j$, let:
$$v \in [i,j] \equiv i \leq v \leq j \vee j \leq v \leq i.$$

\begin{table}[t]
\centering
\scriptsize
\begin{tabular}{|l|l|l|}
\hline
\textbf{\small Relation} & \textbf{\small Polynomial Encoding} \\
\hline
\hline
left of  & $(x_b - x_a) (y_p - y_a) > (y_b - y_a) (x_p - x_a)$ \\
(point $p$, segment $s_{ab}$) & \\
\hline
collinear  & $(x_b - x_a) (y_p - y_a) = (y_b - y_a) (x_p - x_a)$ \\
(point $p$, segment $s_{ab}$) & \\
\hline
right or collinear & $(x_b - x_a) (y_p - y_a) \leq (y_b - y_a) (x_p - x_a)$ \\
(point $p$, segment $s_{ab}$) & \\
\hline
parallel & $(y_b - y_a) (x_d - x_c) = (y_d - y_c) (x_b - x_a)$  \\
(segments $s_{ab}, s_{cd}$) & \\
\hline
coincident & $\Pred{collinear}(p,s_{ab}) \wedge x_p \in [x_a, x_b] \wedge y_p \in [y_a, y_b]$ \\
(point $p$, segment $s_{ab}$)  & \\
\hline
coincident  & $(x_c - x_p)^2 + (y_c - y_p)^2 = r_c^2$ \\
(point $p$, circle $c$) & \\
\hline
inside  & $(0 <  (p - p_{1_a}) \cdot v_a < w_a) \wedge$ \\
(point $p$, rectangle $a$) & $(0 < (p - p_{1_a}) \cdot v_a^\prime < h_a)$  \\
\hline
intersects & $(0 \leq  (p - p_{1_a}) \cdot v_a \leq w_a) \wedge$ \\
 (point $p$, rectangle $a$) & $ (0 \leq (p - p_{1_a}) \cdot v_a^\prime \leq h_a)$\\
\hline
boundary & $\Pred{intersects}(p,a) \wedge \neg \Pred{inside}(p,a)$ \\
(point $p$, rectangle $a$) & \\
\hline
outside & $\neg \Pred{intersects}(p,a)$ \\
(point $p$, rectangle $a$)  & \\
\hline
concentric & $\frac{1}{2}(p_{3_a} - p_{1_a}) + p_{1_a} = \frac{1}{2}(p_{3_b} - p_{1_b}) + p_{1_b}$ \\
(rectangles $a,b$) & \\
\hline
part of & $\bigwedge_{i=1\dots4} \Pred{intersects}(p_{i_a}, b)$\\
(rectangles $a,b$) & \\
\hline
proper part & $\neg  \Pred{equals}(a,b) \wedge \Pred{part\_of}(a,b)$\\
 (rectangles $a,b$) & \\
\hline
boundary part of & $\bigwedge_{i=1\dots4} \Pred{boundary}(p_{i_a}, b)$\\
(rectangles $a,b$)  & \\
\hline
discrete from  & $\bigvee_{i=1\dots4} \big( ~~ \Pred{right\_or\_collinear}(a,(p_{i_b}, p_{(i+1)_b}))$ \\
(rectangles $a,b$) & $~~~~~~~~~~~~~~~~~ \vee \Pred{right\_or\_collinear}(b,(p_{i_a}, p_{(i+1)_a})) \big)$ \\
\hline
partially overlaps & $\exists p_i \in \Reals^2 \big( \Pred{inside}(p_i,a) \wedge \Pred{inside}(p_i,b) \big) \wedge$   \\
(rectangles $a,b$)  & $\exists p_j \in \Reals^2 \big( \Pred{inside}(p_j,a) \wedge \Pred{outside}(p_j,b) \big) \wedge$  \\
& $\exists p_k \in \Reals^2 \big( \Pred{outside}(p_k,a) \wedge \Pred{inside}(p_k,b) \big)$  \\
\hline
\end{tabular}
\caption{\textit{\small Polynomial encodings of qualitative spatial relations.}}
\label{tab:encodings}
\end{table}

Determining whether a point is inside a rectangle is based on vector projection. Point $p$ is projected onto vector $v$ by taking the dot product, 
$$(x_p, y_p) \cdot (x_v, y_v) = x_p x_v + y_p y_v$$

Given point $a$ and rectangle $b$, we translate the point such that the bottom left corner of $b$ is at the origin, project $p_a$ on the base and side vectors of $b$, and check whether the projection lies within the width and height of the rectangle,
$$ 0 < (p_a - p_{1_b}) \cdot v_b < w_b$$
$$ 0 < (p_a - p_{1_b}) \cdot v_b^\prime < h_b$$

Convex regions $a,b$ are disconnected iff there is a hyperplane of separation, i.e. there exists a line $l$ such that $a$ and $b$ lie on different sides of $l$. This is the basis for determining the \emph{discrete from} relation between rectangles.

\subsection{Formalising Knowledge about Symmetries}

In this section we formally determine the qualitative spatial relations that are preserved by various affine transformations. A fundamental property of affine transformations is that they preserve (a) point coincidence (e.g. line intersections), (b) parallelism between straight lines, and (c) proportions of distances between points on parallel lines  \cite{martin1982transformation}. For example, consider the configuration of points $p_a,p_b,p_c,p_d$ and lines $l_1, l_2, l_3$ in Figure \ref{fig:affine-eg}: (a) we cannot introduce new points of coincidence between lines by applying transformations such as translation, scaling, reflection, and rotation. Conversely, if two lines intersect, then they will still intersect after these transformations; (b) lines $l_1,l_2$ are parallel before and after the transformations; lines $l_1, l_3$ are always non-parallel; (c) the ratio of distances between collinear points $p_a, p_b, p_c$ is maintained; formally, let $s_{ij}$ be the segment between points $p_i, p_j$ and let $|s_{ij}|$ be the length of the segment. Then the ratio $\frac{|s_{ab}|}{|s_{bc}|}$ in Figure \ref{fig:affine-eg} is the same before and after the transformations.




\begin{figure}[t]
\centering
\subfigure[\scriptsize configuration of points and lines]{
	\includegraphics[height=0.17\columnwidth]{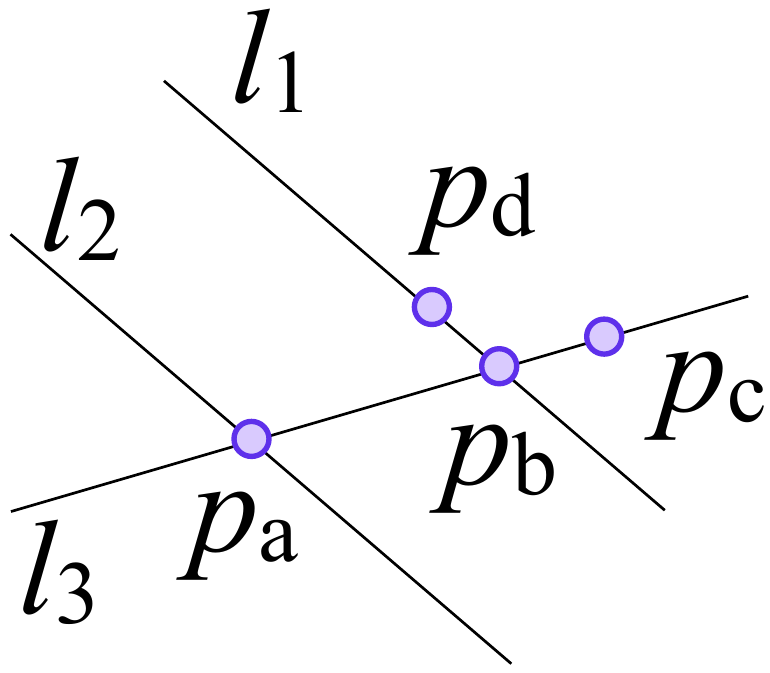}
	\label{fig:affine-eg-1}
} \quad
\subfigure[\scriptsize vertical reflection]{
	\includegraphics[height=0.17\columnwidth]{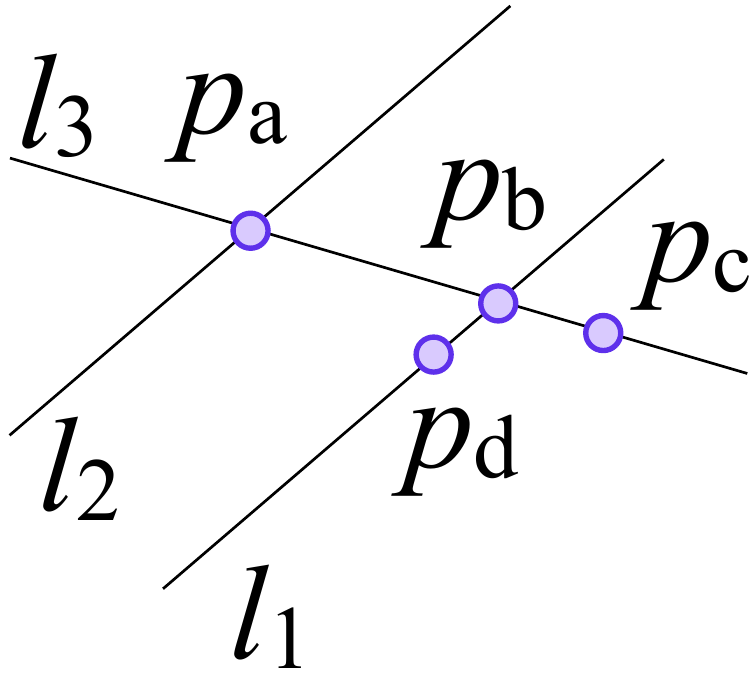}
	\label{fig:affine-eg-2}
} \quad
\subfigure[\scriptsize $x$-scaling]{
	\includegraphics[height=0.17\columnwidth]{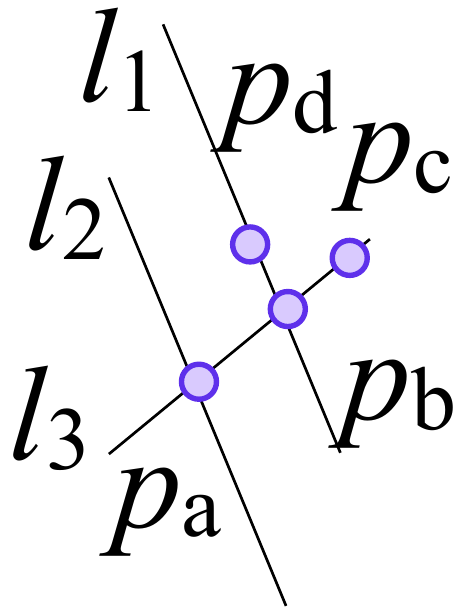}
}\quad
\subfigure[\scriptsize rotation]{
	\includegraphics[height=0.17\columnwidth]{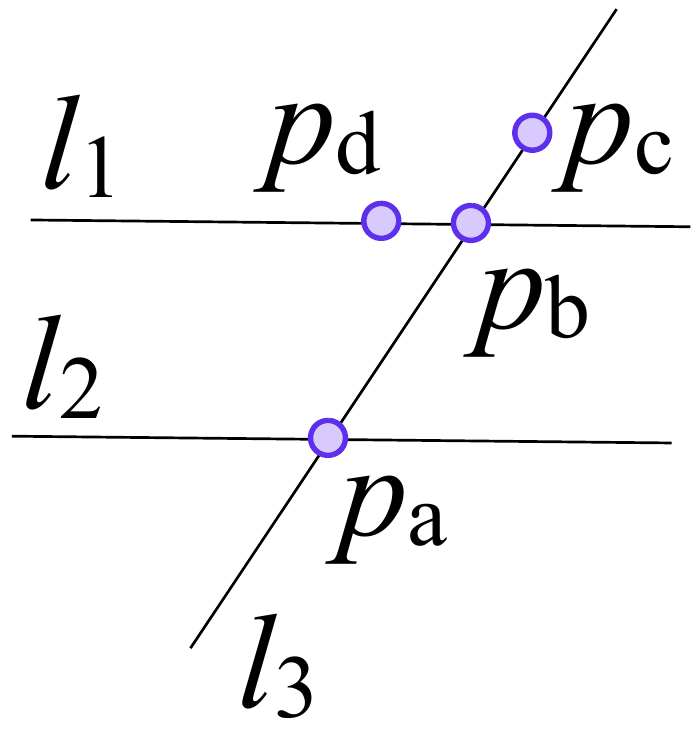}
}
\caption{Affine transformations preserve point coincidence, parallelism, and ratios of distances along parallel lines.}
\label{fig:affine-eg}
\end{figure}


Based on these properties we can determine the transformations that preserve various qualitative spatial relations.\footnote{The properties of affine transformations and the geometric objects that they preserve are well understood; further information is readily available in introductory texts such as \cite{martin1982transformation}. Our key contribution is formalising and exploiting this spatial knowledge as modular and extensible common-sense rules in intelligent knowledge-based spatial assistance systems.}





\smallskip


\begin{mytheorems}\label{th:sym}

The following qualitative spatial relations are preserved under translation, scale, rotation, and reflection (applied to the embedding space): topology, mereology, coincidence, collinearity, line parallelism.

\end{mytheorems}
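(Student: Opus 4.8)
The plan is to treat all four transformations uniformly through their common algebraic form and then to verify preservation one relation class at a time, leaning on the geometric invariants of affine and similarity maps rather than grinding through each polynomial encoding in turn. First I would record that translation, uniform scaling by $k>0$, rotation, and reflection are each affine maps $f'(x)=Qx+t$ with $\det Q\neq 0$, hence invertible, and that each is in fact a \emph{similarity}: there is a constant $\lambda>0$ with $\|f'(x)-f'(y)\|=\lambda\|x-y\|$ for all $x,y$ ($\lambda=k$ for scaling, $\lambda=1$ for the others). Since the inverse of each of the four transformations is again one of the four types, it suffices to prove the forward implication $G(s)\Rightarrow G(T^c s)$; the converse then follows by applying it to the inverse transformation, yielding the ``if and only if'' of Condition~\ref{cond:symcons}.

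Next I would dispatch collinearity and line parallelism together. Both are encoded by the vanishing of a $2\times 2$ determinant, i.e.\ by linear dependence of the relevant point-differences $p_i-p_j$. Under $x\mapsto Qx+t$ the translation cancels in every difference, and an invertible $Q$ preserves linear dependence (equivalently, the determinant is multiplied by $\det Q\neq 0$), so the condition holds for $s$ iff it holds for $T^c s$; this is precisely the affine preservation of incidence and of parallelism, properties~(a) and~(b). Here I would also remark why the strict orientation relations \emph{left of} and \emph{right of} are deliberately excluded from the statement: they use the same determinant under a \emph{strict} inequality, and reflection has $\det Q=-1$, flipping the sign and swapping left with right, so only the sign-free equality relations survive all four maps.

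For coincidence I would split into the three cases of Table~\ref{tab:encodings}. Point-on-segment is collinearity conjoined with on-segment betweenness; collinearity is handled above, and betweenness is preserved because affine maps preserve convex combinations (equivalently, the ratios of distances along a line of property~(c)), so the transformed point lies on the transformed segment iff the original did. Point-on-circle is the one genuinely metric relation and the step I expect to be the main obstacle, since a general affine map sends circles to ellipses and would break the relation; here I must use the \emph{similarity} property. With factor $\lambda$, a circle of radius $r_c$ maps to a circle of radius $\lambda r_c$ while $\|c-p\|$ becomes $\lambda\|c-p\|$, so both sides of $\|c-p\|=r_c$ scale by $\lambda$ and the equation is preserved. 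Point-in-region (inside, outside, boundary) is topological and is subsumed by the final case.

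Finally I would handle topology and mereology in one stroke: an invertible affine map is a homeomorphism of $\Reals^n$, and homeomorphisms send interior to interior, boundary to boundary and exterior to exterior while preserving inclusion, disjointness and overlap of sets. Because every relation in these classes (\emph{inside, outside, boundary, part of, proper part, partially overlaps, discrete from}, and equality) is defined purely from these topological primitives together with Boolean set operations, each is invariant under all four transformations, with point-in-region coincidence as a special case. The only genuine care needed throughout is tracking which level of invariance each relation requires --- bare homeomorphism for mereotopology, affine incidence for collinearity and parallelism, but full similarity for the circle relation --- and this bookkeeping, alongside the reflection sign-flip that rules out left/right, is where the argument demands the most attention.
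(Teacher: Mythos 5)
Your proof is correct, but it proceeds by a genuinely different route than the paper's. The paper's proof is a short reduction: it takes as definitional the classical facts that affine maps preserve parallelism, point coincidence, and ratios of collinear distances, and then derives preservation of mereotopology by contradiction --- if a \emph{part of} or \emph{contact} relation between regions $a,b$ changed under the transformation, some point coincident with $a$ would have to change its coincidence relation with $b$, which is impossible. You instead work with three separate mechanisms: the determinant/linear-dependence argument for collinearity and parallelism, the homeomorphism property of invertible affine maps for mereotopology, and --- crucially --- the similarity (uniform distance-scaling) property for point-on-circle coincidence, plus the tidy observation that closure of the four transformation types under inversion reduces the biconditional of Condition~\ref{cond:symcons} to one implication. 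What your approach buys is rigor exactly where the paper's proof is silent: the coincidence class in the paper explicitly includes incidence of a point with a circle boundary (Table~\ref{tab:encodings}), and that relation is \emph{not} preserved by general affine maps (circles go to ellipses), so the paper's appeal to generic affine invariants cannot cover it; your similarity argument is the missing ingredient that actually handles it for the four listed transformations. Your determinant sign-flip remark also gives a precise reason for the exclusion of \emph{left of}/\emph{right of}, which the paper only states by counterexample. What the paper's argument buys in exchange is brevity and generality in a different direction: its coincidence-based reduction applies verbatim to \emph{all} affine transformations (including non-uniform scaling) for the relations it does cover, which matches the broader knowledge-formalisation claims made after the theorem, whereas your homeomorphism-plus-similarity bookkeeping is tailored to the four similarity-type transformations in the statement.
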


\begin{proof}
By definition, affine transformations preserve \emph{parallelism} with respect to qualitative line orientation, and point \emph{coincidence}. Due to preservation of point coincidence and proportions of collinear distances by affine transformations, it follows that mereological \emph{part of} and topological \emph{contact} relations between regions are preserved, i.e. if a mereological or topological relation changes between regions $a,b$, then by definition there exists a point $p$ coincident with $a$ such that the coincidence relation between $p$ and $b$ has changed; but this cannot occur as point coincidence is maintained with affine transformations by definition, therefore mereological and topological relations are also maintained. 
\end{proof}

The interaction between spatial relations and transformations is richer than we have space to elaborate on here, i.e. not all qualitative spatial relations are preserved under all affine transformations; orientation is not preserved under reflection (e.g. Figure \ref{fig:affine-eg-2} gives a counter example), distance is not preserved under non-uniform scaling. To summarise, we formalise the following knowledge as modular commonsense rules in CLP(QS): point-coincidence, line parallelism, topological and mereological relations are preserved with all affine transformations. Relative orientation changes with reflection, and qualitative distances and perpendicularity change with non-uniform scaling. Spheres, circles, and rectangles are not preserved with non-uniform scaling, with the exception of axis-aligned bounding boxes.

\subsubsection{``Trading'' Transformations}

Symmetries are used to eliminate object variables. As a metaphor, unbound variables are replaced by constant values in ``exchange'' for transformations. We start with a set of transformations that can be applied to a configuration: translation, scaling, arbitrary rotation, and horizontal and vertical reflection. We can then ``trade'' each transformation for an elimination of variables. Each transformation can only be ``spent'' once. Theorem\;\ref{thr:prune-case} presents an instance of such a pruning case. 

Table\;\ref{tab:pruning} presents a variety of different pruning cases for position variables and the associated combination of transformations, as illustrated in Figure \ref{fig:position-pruning}.\footnote{All cases have been verified using Reduce as presented in Theorem\;\ref{thr:prune-case}.} Some cases require more than one distinct set of parameter restrictions to cover the set of all position variables due to point coincidence being preserved by affine transformations. For example, consider case (f): all pairs of points $p_1, p_2$ can be transformed into any other pair of points $p_i, p_j$ by translation, rotation, and scaling, iff $p_1 = p_2 \leftrightarrow p_i = p_j$. Thus, to cover all pairs of possible points, we need to consider two distinct parameter restrictions: $p_i = p_j$ and $p_i \neq p_j$; we refer to these as \emph{subcases}.

Many further pruning cases are identifiable. For example, a version of case $(i)$ can be defined without reflection by requiring more sub-cases where $c_4 > c_2$ and $c_4 < c_2$. Case $(i)$ can be extended so that all six coordinates of three points are grounded if we also ``exchange'' the \emph{skew} transformation (e.g. applicable to object domains like triangles or points).


\begin{mytheorems}\label{thr:prune-case}
Any pair of object position variables $(x_1,y_1), (x_2,y_2)$ can be transformed into any given position constants $(c_1,c_2), (c_3,c_2)$ such that $(c_1 = c_3 \leftrightarrow (x_1 = x_2 \wedge y1=y2))$ by applying: an $xy$-translation, a rotation about the origin in the range $(0,2\pi)$, and an $x$-scale.
\end{mytheorems}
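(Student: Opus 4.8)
The plan is to argue constructively: I would exhibit explicit parameters for the rotation angle $\theta$, the $x$-scale factor $k$, and the translation vector $(t_x,t_y)$ that realise the required mapping, composing them in the order ``rotate, then scale, then translate'' (i.e. applying the rotation about the origin first and the $xy$-translation last). This order is convenient because it decouples the four scalar target equations into a triangular system. Since the $x$-scale and the translation both leave \emph{differences} of $y$-coordinates unchanged (scaling fixes $y$, and the translation cancels in a difference), the requirement that the two images share the common height $c_2$ reduces to the single requirement that the rotated segment be \emph{horizontal}; this alone fixes $\theta$. The $x$-separation $c_3-c_1$ of the two targets is then controlled solely by $k$, and the absolute placement of the first image at $(c_1,c_2)$ is controlled solely by $(t_x,t_y)$.

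For the generic case $P_1=(x_1,y_1)\neq P_2=(x_2,y_2)$ I would proceed as follows. Let $d=\lVert P_2-P_1\rVert>0$ and choose $\theta$ so that the rotation carries the direction of $P_2-P_1$ onto the $x$-axis; after rotation the two image points have equal $y$-coordinate and signed $x$-separation $\pm d$. Because the hypothesis $c_1=c_3\leftrightarrow P_1=P_2$ forces $c_1\neq c_3$ here, the factor $k=(c_3-c_1)/(\pm d)$ is well defined and nonzero, and applying it makes the $x$-separation of the images exactly $c_3-c_1$ while preserving their equal height. Finally I would set $(t_x,t_y)$ equal to $(c_1,c_2)$ minus the image of $P_1$ after rotation and scaling; a one-line check then gives $P_1\mapsto(c_1,c_2)$ and $P_2\mapsto(c_1+(c_3-c_1),c_2)=(c_3,c_2)$. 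The degenerate case $P_1=P_2$ is immediate: the hypothesis then gives $c_1=c_3$, and any admissible rotation and scale followed by the translation that sends the common image to $(c_1,c_2)$ maps both coincident points to the single target.

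I would also record why the biconditional is exactly the solvability condition, since this both motivates and is consumed by the construction. A translation composed with a rotation and a \emph{nonzero} $x$-scale is invertible, hence injective, so distinct sources must have distinct images, forcing $c_1\neq c_3$ whenever $P_1\neq P_2$; conversely coincident sources must share an image, forcing $c_1=c_3$ when $P_1=P_2$. Thus the stated equivalence is precisely the obstruction-free regime, and the construction above covers all of it.

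The main obstacle I anticipate is not the generic computation but the boundary interaction between the open rotation interval and the axis-aligned configurations. The rotation is restricted to $(0,2\pi)$, which excludes the identity, so when $P_2-P_1$ is already horizontal the ``obvious'' angle $0$ is unavailable; there one must instead rotate by $\pi$ (which lies in the range) and absorb the induced orientation reversal into the sign of $k$. I would therefore treat the choice between the two horizontalising rotations and the sign of the $x$-scale together, checking that for every configuration at least one admissible $\theta\in(0,2\pi)$ pairs with a valid nonzero $k$ yielding the correctly signed separation $c_3-c_1$. Confirming that no configuration slips through these boundary cases is the delicate part of the argument (and is presumably what the mechanised check in Reduce certifies); the remaining algebra is routine.
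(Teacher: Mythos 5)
Your route is genuinely different from the paper's: the paper gives no hand construction at all, but encodes the claim as a single first-order sentence over the reals (rotation parametrized by a unit vector $(d_x,d_y)$ with $d_x^2+d_y^2=1$, scale constrained by $0<s_x$, points mapped by $SRx+T$) and reports that Redlog quantifier elimination reduces it to $\top$. Your explicit rotate--scale--translate construction, the triangular decoupling of the four coordinate equations, the injectivity argument for why the biconditional is exactly the solvability condition, and the degenerate case $P_1=P_2$ are all correct, and for the generic case ($P_1\neq P_2$ with $P_2-P_1$ not horizontal) both horizontalizing angles lie in $(0,2\pi)$, so you can always pick the one giving a positive scale. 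As a human-readable replacement for a black-box check, that part is a real gain.

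The gap is in your treatment of the horizontal boundary case, and it is not merely ``delicate'' --- your proposed fix is the wrong one relative to what the theorem must deliver. You suggest rotating by $\pi$ and absorbing the orientation reversal into the sign of $k$, i.e.\ using a negative $x$-scale. But a negative $x$-scale factors as a positive $x$-scale composed with the horizontal reflection $\mathrm{diag}(-1,1)$, and reflection is precisely the transformation this pruning case must not consume: the paper's verified formula demands $0<s_x$, reflection is traded separately (cases h and i of Table~\ref{tab:pruning}), and the discussion around Theorem~\ref{th:sym} stresses that relative orientation is not preserved under reflection. If the construction silently spends a reflection, the pruning becomes unsound for constraint graphs containing orientation relations such as \emph{left of} --- exactly the graphs this case is meant to serve. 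Concretely, under your literal reading of the angle range: take $P_1=(0,0)$, $P_2=(1,0)$ and targets $(c_1,c_2)=(0,0)$, $(c_3,c_2)=(1,0)$ (the very constants used in the paper's algorithm); the only horizontalizing rotation in the open interval is $\pi$, which forces $s_x=-1$, so with a positive scale no admissible transformation exists and the statement, read with the identity excluded, is false. The correct resolution is the opposite of yours: keep the scale positive and admit the identity rotation. The paper's unit-vector parametrization includes $(d_x,d_y)=(1,0)$, so what Reduce actually certifies allows $\theta=0$; the range ``$(0,2\pi)$'' in the theorem text is loose notation, not an exclusion of the identity. With the identity admitted, your generic construction covers every configuration with $k>0$, and no reflection is ever needed.
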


\begin{proof}
The corresponding expression has been verified using the \emph{Reduce} system (\emph{Redlog} quantifier elimination) \cite{Dolzmann2006}; all variables are quantified over reals.

\noindent {\footnotesize
$\forall c_1 \forall c_2  \forall c_3  \forall x_1 \forall y_1 \forall x_2 \forall y_2 \\ ~~ (c_1 = c_3 \leftrightarrow (x_1 = x_2 \wedge y1=y2)) \leftrightarrow \exists t_x \exists t_y \exists d_x \exists d_y  \exists s_x \big( \\ ~~ (0 < s_x) \wedge (d_x^2 + d_y^2 = 1) \wedge \\ ~ \Let S = \VectorC{s_x ~~ 0}{~~ 0 ~~ 1} \wedge \Let R = \VectorC{d_x -d_y}{d_y ~~~ d_x} \wedge \Let T = \VectorC{t_x}{t_y} \wedge \\ ~ \VectorC{c_1}{c_2} = S R \VectorC{x_1}{y_1} + T \wedge \VectorC{c_3}{c_2} = S R \VectorC{x_2}{y_2} + T \big) \equiv \top$
}
\end{proof}

We can use this pruning case on any spatial constraint graph $G$ where the graph's spatial relations are preserved by translation, rotation, and scaling. Given graph $G=(N,E)$, the following Algorithm applies the pruning case, with selected constants $c_1=0$, $c_2=0$, and $c_3=1$ or $c_3=0$:

\begin{enumerate}
	\item select object position variables $p_1, p_2$ from nodes in $N$
	\item copy $G$ to create $G_1, G_2$
	\item in $G_1$ set $p_1 = (0,0), p_2=(1,0)$ $~$ \emph{(case $c_1 \neq c_3$)}
	\item in $G_2$ set $p_1 = (0,0), p_2=(0,0)$ $~$ \emph{(case $c_1 = c_3$)}
	\item if the task is:
	
	\begin{enumerate}
		\item \emph{consistency} of $G$ then solve $\bigvee_{i=1}^2 \exists s ~ G_i (s)$
		\item \emph{sufficiency}, $G \rightarrow G^\prime$, then solve \\ $\bigwedge_{i=1}^2 \neg \exists s ( G_i (s) \wedge \neg G^\prime(s))$
	\end{enumerate}
	
\end{enumerate}

In Step 1 any pair of objects can be selected for which their position variables will be grounded; we also employ policies that target computationally costly subgraphs (for example, pairs of non-equal circles that share a boundary point are often good candidates for this pruning case). Having eliminated free variables from the system of polynomial constraints, the constraints are significantly more simple to solve. Due to the double exponential complexity $O(c_1^{c_2^n})$ reducing $n$ has a significant impact on performance; the system may even collapse from nonlinear constraints to linear (solvable in $O(c^n)$) or constants.

\begin{figure}[t]
\centering
\subfigure[]{
	\includegraphics[height=0.18\columnwidth]{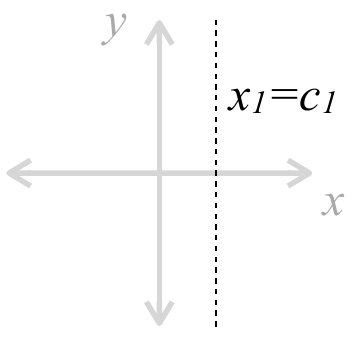}
} \quad
\subfigure[]{
	\includegraphics[height=0.18\columnwidth]{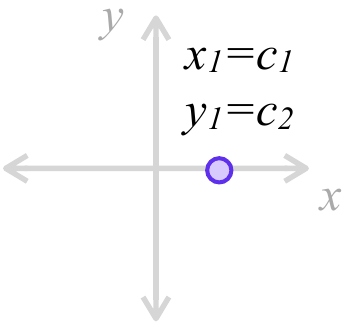}
} \quad
\subfigure[]{
	\includegraphics[height=0.18\columnwidth]{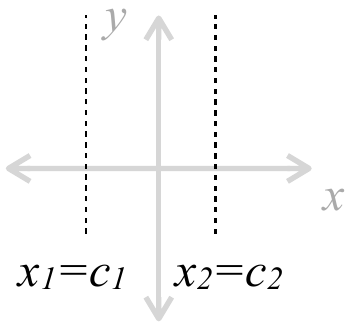}
}

\subfigure[]{
	\includegraphics[height=0.18\columnwidth]{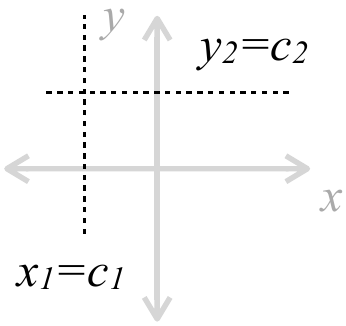}
} \quad
\subfigure[]{
	\includegraphics[height=0.18\columnwidth]{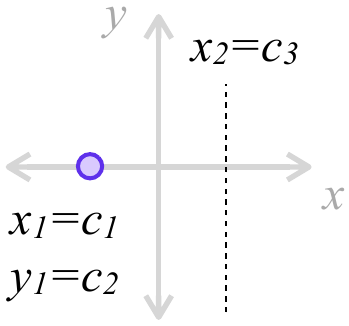}
} \quad
\subfigure[]{
	\includegraphics[height=0.18\columnwidth]{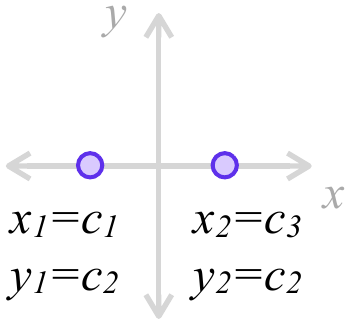}
}

\subfigure[]{
	\includegraphics[height=0.18\columnwidth]{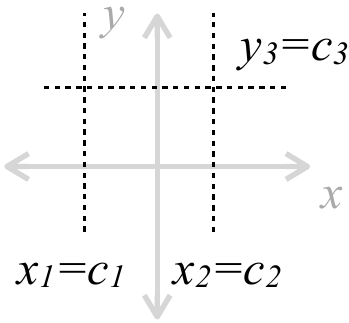}
} \quad
\subfigure[]{
	\includegraphics[height=0.18\columnwidth]{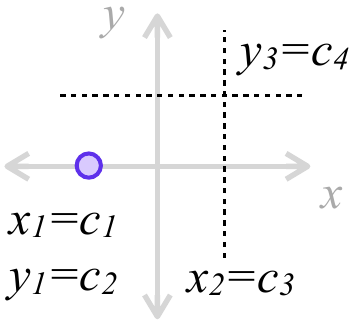}
} \quad
\subfigure[]{
	\includegraphics[height=0.18\columnwidth]{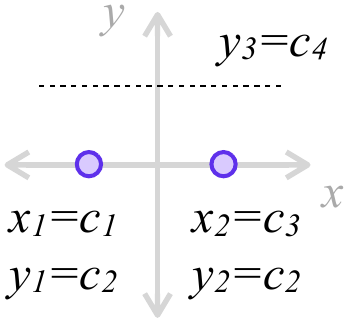}
}
\caption{Cases for pruning position parameters.}
\label{fig:position-pruning}
\end{figure}

\begin{table}
\centering
\scriptsize
\begin{tabular}{|c||l|l|}
\hline
\textbf{\small Case} & \textbf{\small Parameter restrictions} & \textbf{\small Traded transformations} \\
\hline
\hline
a & $x_1=c_1$ & $x$-translate \\
b & $x_1=c_1, y_1=c_2$ & $xy$-translate \\
\hline
c & $x_1=c_1, x_2=c_2, \casei c_1 \neq c_2 \caseii c_1 = c_2$ & $x$-translate, rotate $\pi$, $x$-scale \\
d & $x_1=c_1,y_2=c_2$ & $xy$-translate \\
e & $x_1=c_1, y_1=c_2,x_2=c_3,$ & $xy$-translate, rotate $\pi$, $x$-scale \\
& $\casei c_1 \neq c_3 \caseii c_1 = c_3$ & \\
f & $x_1=c_1, y_1=c_2, x_2=c_3, y_2=c_2, $ & $xy$-translate, rotate $(0,2\pi)$, $x$-scale \\
& $\casei c_1 \neq c_3 \caseii c_1 = c_3$ & \\
\hline
g & $x_1=c_1, x_2=c_2, y_3=c_3$ & $xy$-translate, rotate $\pi$, $x$-scale \\
& $\casei c_1 \neq c_2 \caseii c_1 = c_2$ & \\
h $*$ & $x_1=c_1, y_1=c_2,x_2=c_3, y_3=c_4$ & $xy$-translate, rotate $\pi$, $xy$-scale, \\
& $\casei \;\; c_1 \neq c_3 \wedge  c_2 \neq c_4 \caseii c_1 = c_3 \wedge c2 \neq c_4$ & $y$-reflect\\
& $\caseiii c_1 \neq c_3 \wedge  c_2 = c_4 \caseiv c_1 = c_3 \wedge c2 = c_4$ & \\
i $*$ & $x_1=c_1, y_1=c_2, x_2=c_3, y_2=c_2, y_3=c_4$ & $xy$-translate, rotate $(0,2\pi)$, $xy$-scale, \\
& $\casei \;\; c_1 \neq c_3 \wedge  c_2 \neq c_4 \caseii c_1 \neq c_3 \wedge  c_2 = c_4$ & $y$-reflect \\
& $ \caseiii c_1 = c_3 \wedge c_2 = c_4$ & \\
\hline
\end{tabular}
 \caption{\textit{\small Cases for pruning parameters for one position point (a,b), two position points (c-f), three position points (g-i). Cases marked with $*$ require arbitrary scaling (i.e. both uniform and non-uniform).}}
\label{tab:pruning}
\end{table}

\subsection{Combining Symmetry Pruning with Graph Decomposition}

In certain cases, spatial constraint graphs can be decomposed into subgraphs that can be solved independently. For example, subgraphs $G_1, G_2$ can be independently solved if all objects in subgraph $G_1$ are either:

\begin{itemize}
	\item  \emph{disconnected} from all objects in subgraph $G_2$;
	\item  a \emph{proper part} of some object in $G_2$;
	\item \emph{left of} some segment in $G_2$;
	\item only related by \emph{relative size} to some object in $G_2$, and so on.
\end{itemize}

In such cases we can reapply spatial symmetry pruning in each independent sub-graph; this commonsense spatial knowledge is modularly formalised within CLP(QS). For example, consider Proposition 22 of Book I of Euclid's Elements (Figure \ref{fig:construct-triangle}): 

\medskip

\emph{Constructing a triangle from three segments.} Given three line segments $l_{ab}$, $l_{cd}$, $l_{ef}$, draw a line through four collinear points $p_1, \dots, p_4$ such that $|(p_1,p_2)|=|l_{ab}|$, $|(p_2,p_3)|=|l_{cd}|$, $|(p_3,p_4)|=|l_{ef}|$. Draw circle $c_a$ centred on $p_2$, coincident with $p_1$. Draw circle $c_b$ centred on $p_3$ coincident with $p_4$. Draw $p_5$ coincident with $c_a$ and $c_b$. The triangle $p_2, p_3, p_5$ has side lengths such that $|(p_2,p_3)|=|l_{cd}|$, $|(p_3,p_5)|=|l_{ef}|$, $|(p_5,p_2)|=|l_{ab}|$.

\medskip

In this example, the three segments $l_{ab}, l_{cd}, l_{ef}$ and the remaining objects are only related by the distances between their end points. That is, the relative position and orientation of $l_{ab}, l_{cd}, l_{ef}$ is not relevant to the consistency of the spatial graph; we only need to explore all combinations of segment \emph{lengths}. Thus the solver decomposes the graph into four sub-graphs: (1) $l_{ab}$ (2) $l_{cd}$ (3) $l_{ef}$, and (4) $p_1, \dots, p_5, c_a, c_b$. In subgraphs (1),(2),(3) it ``trades'' translation and rotation to ground $p_a=p_c=p_e=(0,0)$, and $y_b=y_d=y_f=0$ and keeps $x$-scale to cover all possible combinations of segment lengths, i.e. $x_b, x_d, x_f$ are free variables. In subgraph (4) CLP(QS) applies the pruning case of Theorem \ref{thr:prune-case} by grounding $p_1, p_4$.

\includegraphics[]{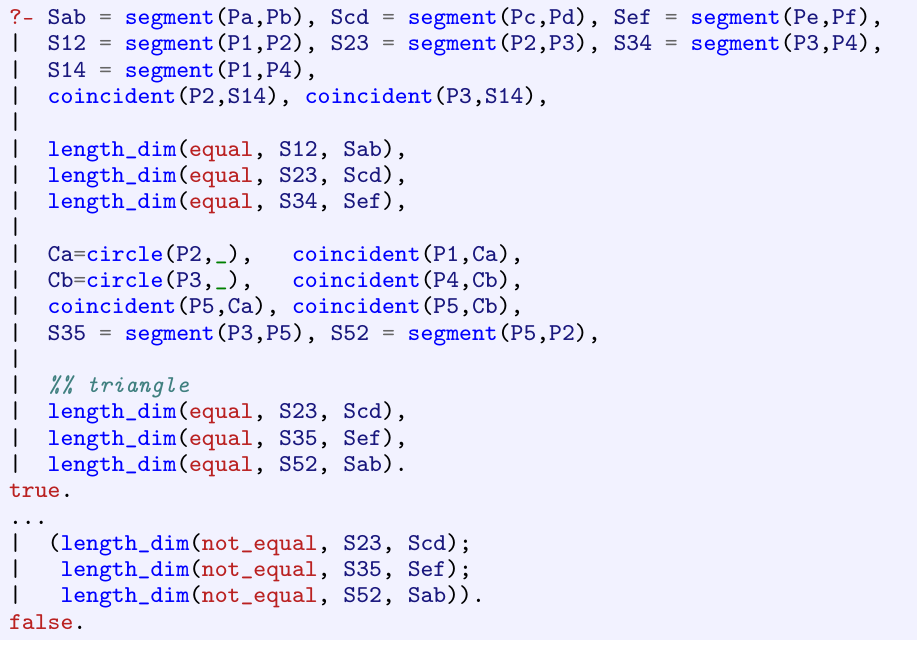}

\begin{figure}[h]
\centering
\includegraphics[width=0.45\columnwidth]{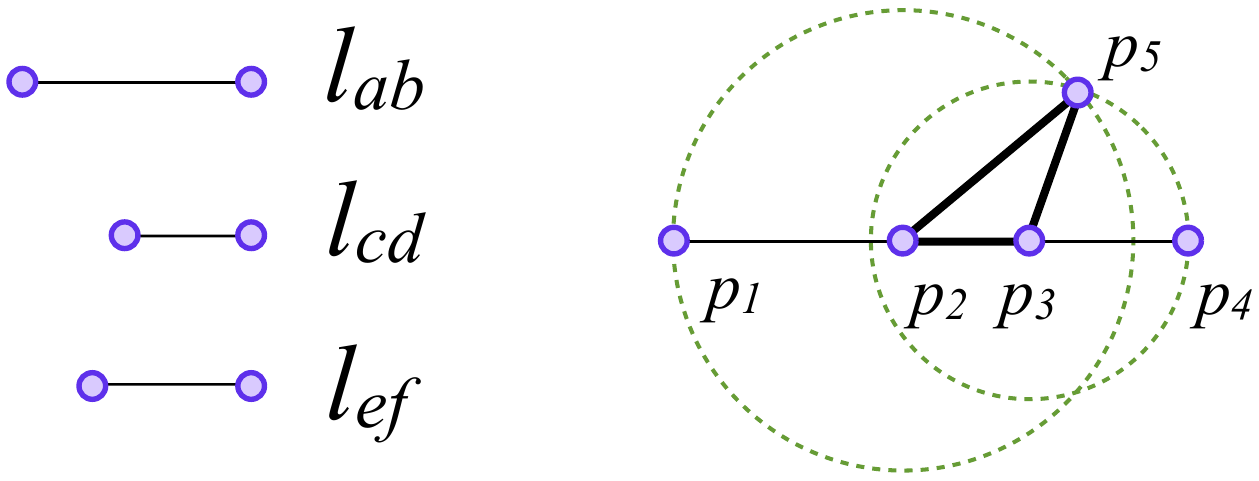}
\caption{Constructing a triangle by decomposing the spatial constraint graph.}
\label{fig:construct-triangle}
\end{figure}

\section{Application-Driven Use Cases}

We present problem instances in the classes of \emph{mereology}, \emph{ruler and compass}, and \emph{contact}. Table \ref{tab:experiments} presents the experiment time results of CLP(QS) using symmetry pruning compared with existing systems: \emph{z3} SMT solver, \emph{Redlog} real quantifier elimination (in the \emph{Reduce} computer algebra system) \cite{Dolzmann2006}, and the relation algebraic qualitative spatial reasoners GQR \cite{gantner2008gqr} and SparQ \cite{cosy:sparq-sc06}. CLP(QS) uses \emph{z3} to solve polynomial constraints (after our pruning), thus z3 is the most direct comparison. Experiments were run on a MacBookPro, OS X 10.8.5, 2.6 GHz Intel Core i7. The empirical results show that no other spatial reasoning system exists (to the best of our knowledge) that can solve the range of problems presented in this section, and in cases where solvers are applicable, CLP(QS) with spatial pruning solves those problems significantly faster than other systems.

\begin{table}
\centering
\scriptsize
\begin{tabular}{||l|r|r|r|r|r|r|| }
\hline
\textbf{\small Problem} & CLP(QS) & z3 & Redlog & GQR & SparQ \\
\hline
\hline
Aligned Concentric  & 6.831 & 47.651 & \timeout & \notapp & \notapp \\
\hline
Boundary Concentric & 2.036  & \timeout  & \timeout & \notapp & \notapp \\
\hline
Mereologically Concentric & 0.105 & 0.373  & \timeout & \notapp & \notapp \\
\hline
Angle Bisector & 0.931 & \timeout  & \timeout & \notapp & \notapp \\
\hline
Sphere Contact & 0.004 & \timeout & \timeout & \fail & \fail\\
\hline
\end{tabular}
 \caption{\textit{\small  Time (in seconds) to solve benchmark problems using CLP(QS) with pruning compared to z3 SMT solver, Redlog (Reduce) quantifier elimination, and GQR and SparQ relation algebraic solvers. \emph{Time out} was issued after a running time of 10 minutes. Failure (\fail) indicates that the incorrect result was given. Not applicable (\notapp) indicates that the problem could not be expressed using the given system.}}
\label{tab:experiments}
\end{table}

\subsection{Spatial Theorem Proving: Geometry of Solids} 

%
%

\noindent Tarski \cite{tarski1956general} shows that a geometric point can be defined by a language of mereological relations over spheres. The idea is to distinguish when spheres are concentric, and to define a geometric point as the point of convergence. Borgo \cite{borgo2013spheres} shows that this can be accomplished with a language of mereology over \emph{hypercubes}. We will use CLP(QS) to prove that the definitions are sound for rotatable squares.



As a preliminary we need to determine whether the intersection of two squares is non-square (Figure \ref{fig:non-square-intersect}) \cite{DBLP:conf/ecai/SchultzB14}. Given two squares $a,b$, the intersection is non-square if $a$ \emph{partially overlaps} $b$ (Table \ref{tab:encodings}) and either (a) $a$ and $b$ are not aligned, $x_{v_a} \neq x_{v_b}$ or (b) the width and height of the intersection are not equal, $w_I \neq h_I$, such that
$$ w_I = \Pred{min}(v \cdot p_{2_a}, v \cdot p_{2_b}) - \Pred{max}(v \cdot p_{1_a}, v \cdot p_{1_b})  $$ 
$$ h_I = \Pred{min}(v^\prime \cdot p_{4_a}, v^\prime \cdot p_{4_b}) - \Pred{max}(v^\prime \cdot p_{1_a}, v^\prime \cdot p_{1_b})  $$

%

\begin{figure}[t]
\centering
\subfigure[$A,B$ are not aligned.]{\includegraphics[width=0.25\columnwidth]{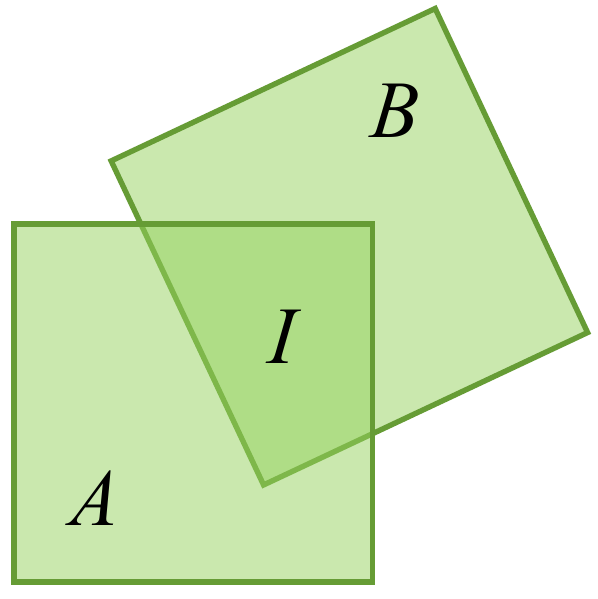}}\quad
\subfigure[$h_I \neq w_I$]{\includegraphics[width=0.32\columnwidth]{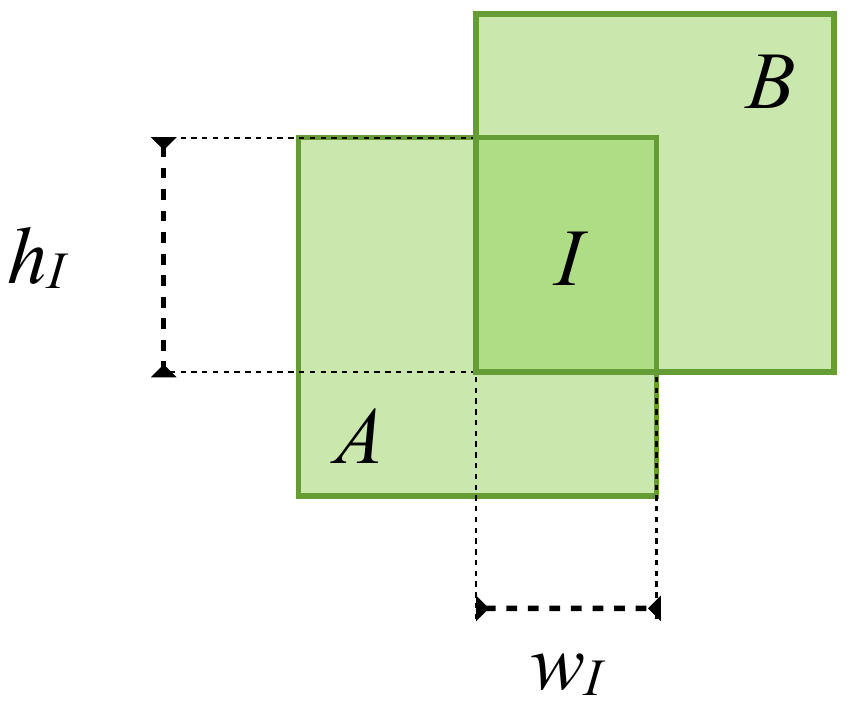}}\quad
\caption{Intersection $I$ of squares $A,B$ is non-square.}
\label{fig:non-square-intersect}
\end{figure}

\emph{Aligned Concentric.} Two squares $A,B$ are aligned and concentric if: $A$ is \emph{part of} $B$ and there does not exist a square $P$ such that (a) $P$ is covertex with $B$, and (b) the intersection of $P$ and $A$ is not a square (Figure \ref{fig:aligned-concentric}).

\includegraphics[]{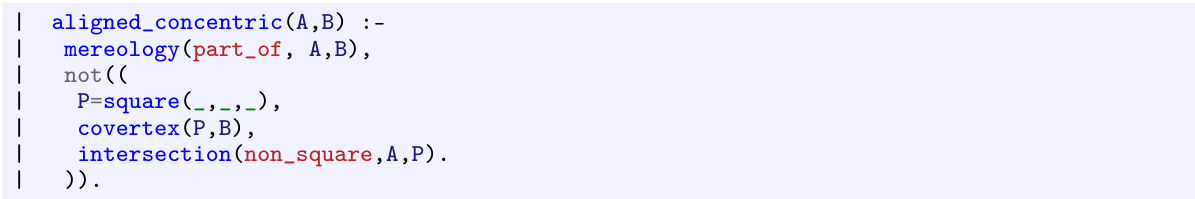}


We use CLP(QS) to prove that the definition is sufficient, by contradiction; two squares are covertex if they are aligned and share a vertex, and the relation \emph{concentric} is the geometric definition of concentricity in Table \ref{tab:encodings} that is used to evaluate the mereological definition of concentricity:

\includegraphics[]{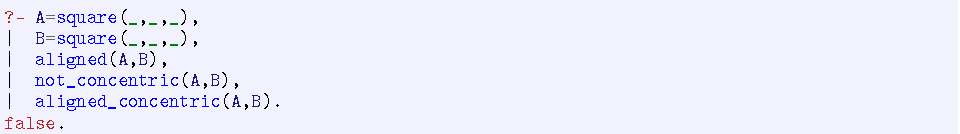}


\begin{figure}[t]
\centering
\subfigure[$A$ and $B$ are non-concentric]{\includegraphics[width=0.18\columnwidth]{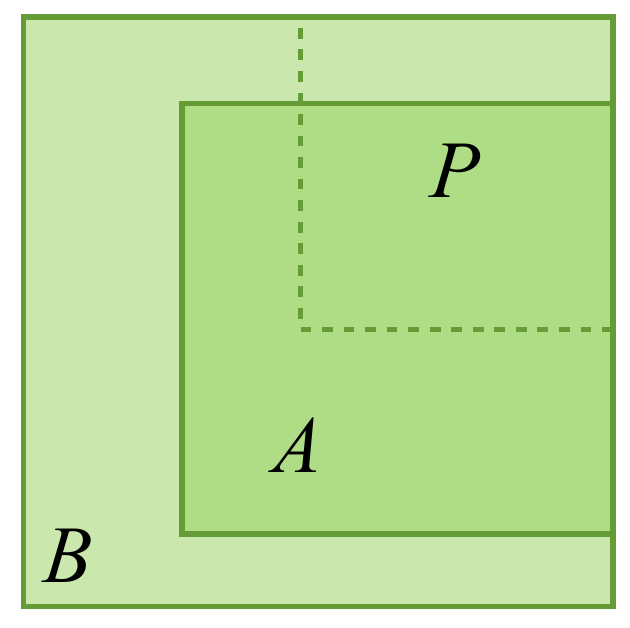}}\quad
\subfigure[$A$ and $B$ are concentric]{\includegraphics[width=0.18\columnwidth]{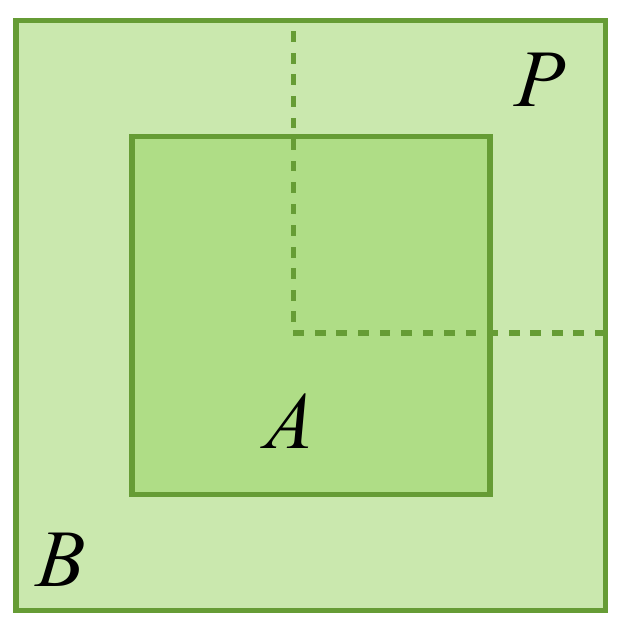}}\quad
\subfigure[$A$ is boundary concentric with $B$.]{\includegraphics[width=0.18\columnwidth]{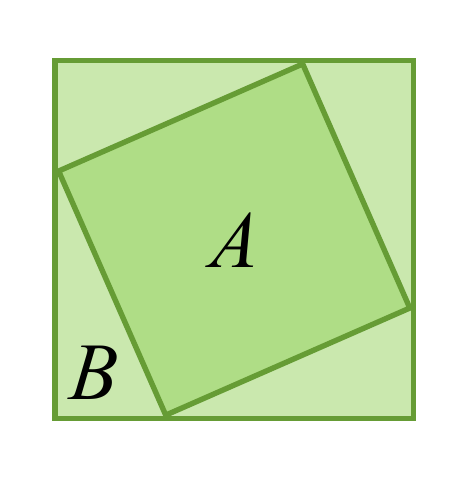}}\quad
\subfigure[$A$ and $B$ are rotated concentric.]{\includegraphics[width=0.18\columnwidth]{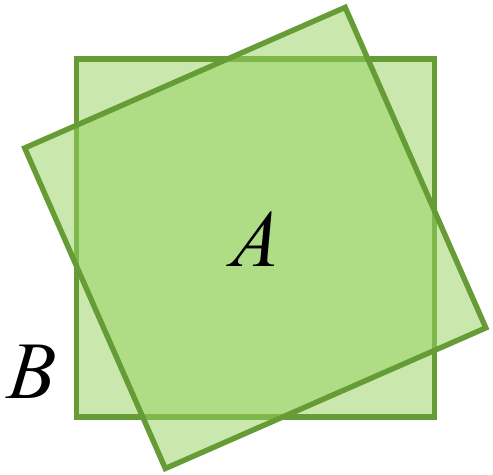}}\quad

\caption{Characterising aligned (a),(b) and rotated (c),(d) concentric squares using mereology (reproduced from [Borgo, 2013])}
\label{fig:aligned-concentric}
\end{figure}

\emph{Boundary Concentric.} Square $A$ is boundary concentric with square $B$ if: $A$ is \emph{proper part of} $B$ and there does not exist a square $Z$ such that (a) $Z$ is \emph{proper part of} $B$ (b) $A$ is \emph{part of} $Z$, and (c) $Z$ is not \emph{part of} $A$ (Figure \ref{fig:aligned-concentric}).

\includegraphics[]{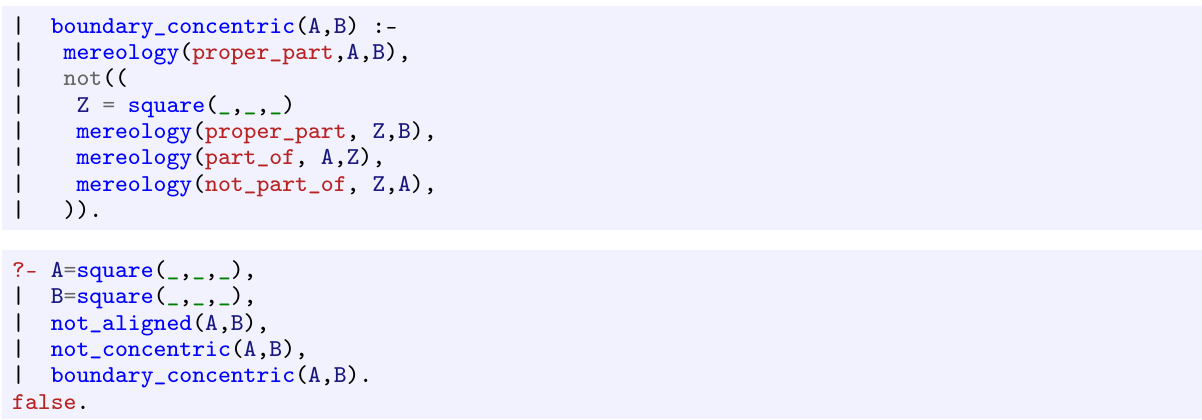}

%
%

\emph{Mereologically Concentric.} Squares $A,B$ are mereologically concentric if: $A,B$ are \emph{aligned concentric} or there exists $Q$ such that (a) $Q$ is \emph{boundary concentric} with $B$ and $Q$ is \emph{aligned concentric} with $A$ or (b) $Q$ is \emph{boundary concentric} with $A$ and $Q$ is \emph{aligned concentric} with $B$.

Having proved the mereological definitions of \emph{aligned} and \emph{boundary} concentricity, we can replace these with more efficient geometric definitions from Table \ref{tab:encodings} when proving mereological concentricity.

\includegraphics[width=\columnwidth]{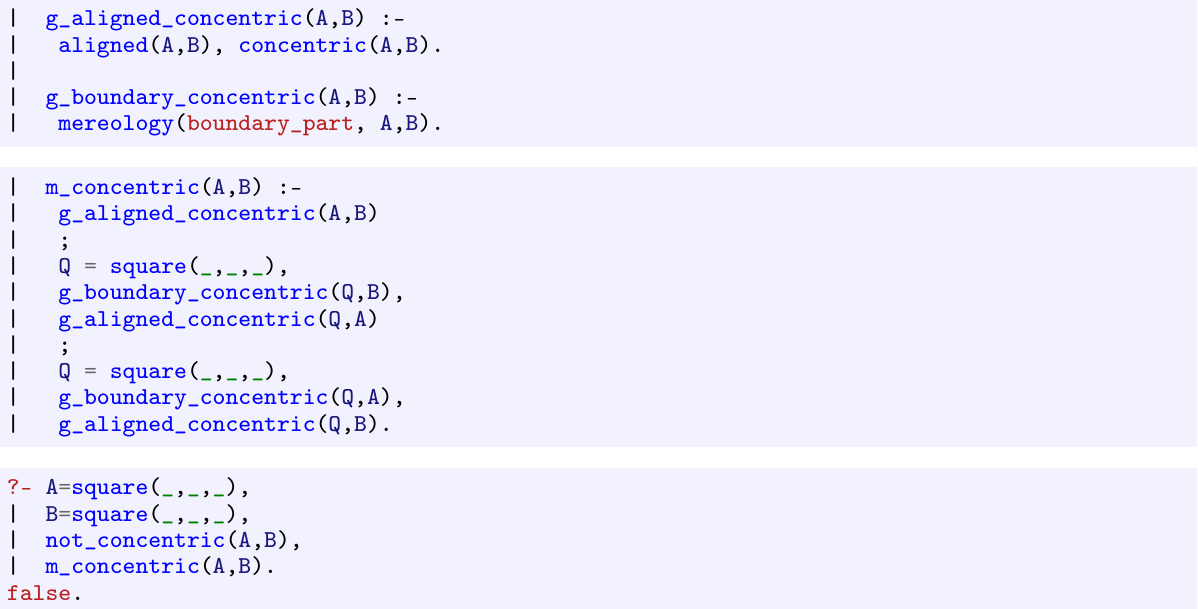}

%
%
%
%


\subsection{Didactics: Ruler--Compass and Contact Problems} 
\emph{Angle Bisector.} Let $l_a,l_b$ be line segments that share an endpoint at $p$. Draw circle $c$ at $p$. Circle $c$ intersects $l_a$ at $p_a$ and $l_b$ at $p_b$. Draw circles $c_a$ at $p_a$ and circle $c_b$ at $p_b$ such that $p$ is coincident with both $c_a, c_b$. Circles $c_a$ and $c_b$ intersect at $p$ and $p_c$. The line segment from $p$ to $p_c$ bisects the angle between $l_a$ and $l_b$ (Figure \ref{fig:angle-bisector}).

We use CLP(QS) to prove that the definition is sufficient. The relation \emph{bisects} is used for evaluation by checking if the midpoint of ($p_a$,$p_b$) is collinear with $l_c$ (i.e. idealised rulers cannot directly measure the midpoint of a line). An interactive diagram is then automatically generated that encodes the specified program (using the FreeCAD system); see Figure \ref{fig:angle-bisector-diagram}.

\includegraphics[width=\columnwidth]{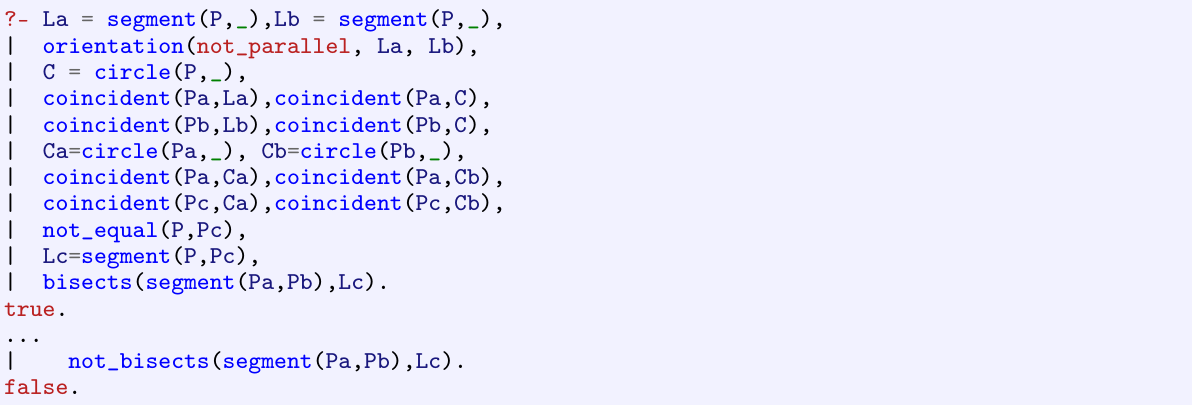}


\begin{figure}[t]
\centering
\subfigure[]{\includegraphics[width=0.32\columnwidth]{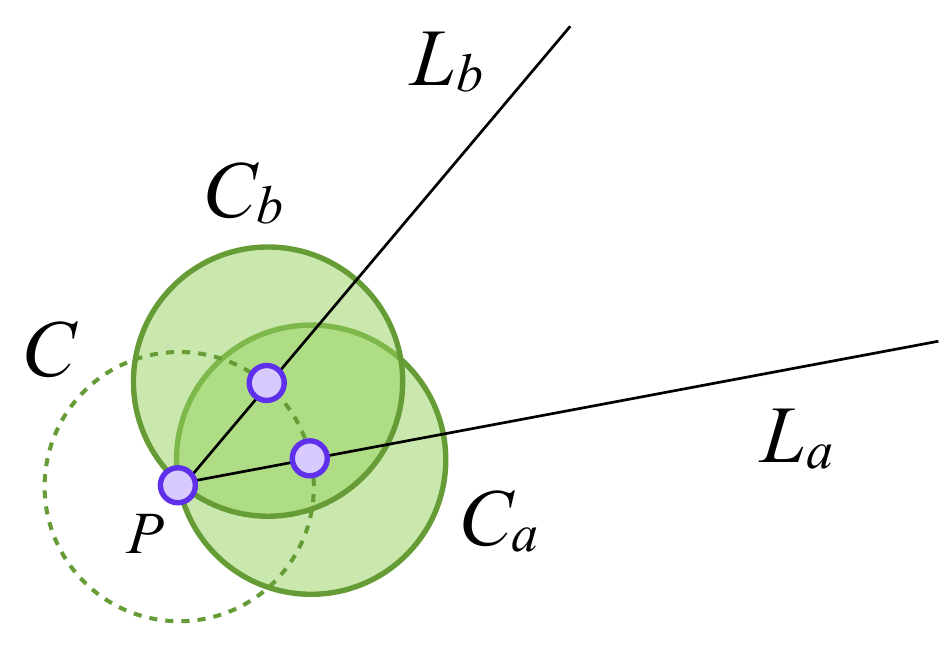}}\quad
\subfigure[]{\includegraphics[width=0.32\columnwidth]{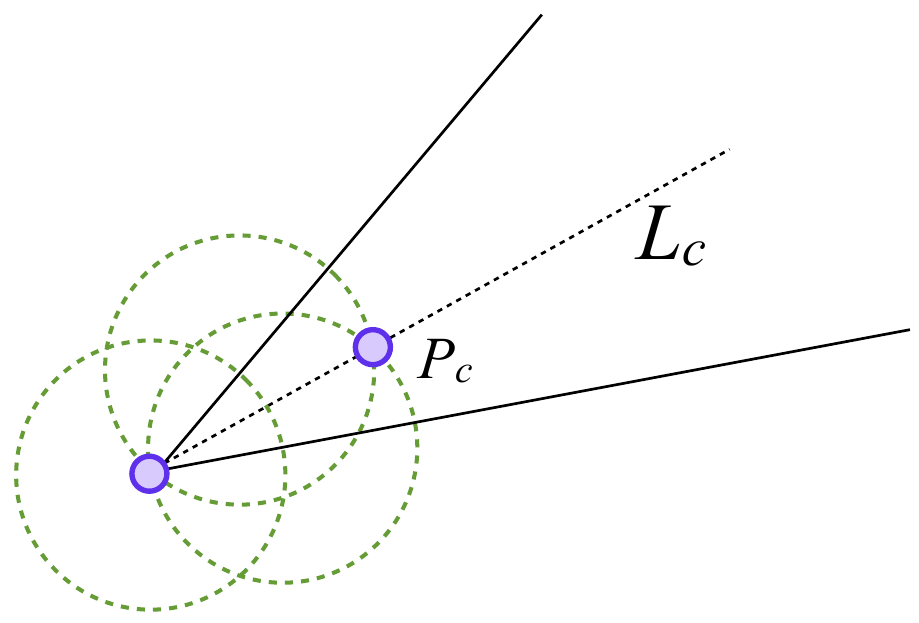}}
\caption{Ruler and compass method for angle bisection. Line $L_c$ bisects the angle between lines $L_a, L_b$.}
\label{fig:angle-bisector}
\end{figure}

\begin{figure}[t]
\centering
\subfigure[]{\includegraphics[width=0.18\columnwidth]{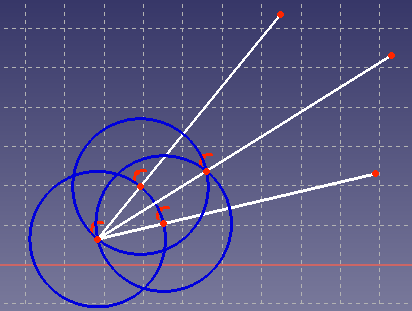}}\quad
\subfigure[]{\includegraphics[width=0.18\columnwidth]{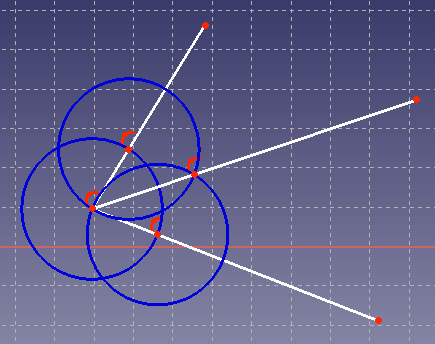}}\quad
\subfigure[]{\includegraphics[width=0.18\columnwidth]{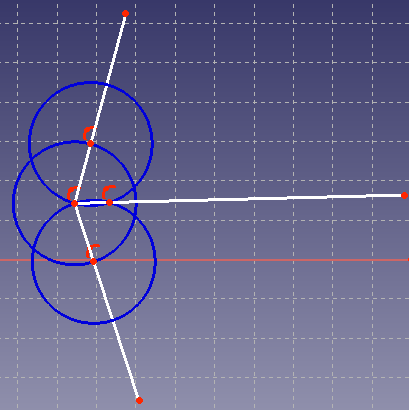}}
\caption{Interactive diagram encoding the student's constructive proof of Euclid's angle bisector theorem; as the student manipulates figures in the diagram, the other geometries are automatically updated to maintain the specified qualitative constraints.}
\label{fig:angle-bisector-diagram}
\end{figure}

\emph{Sphere contact.} Determine the maximum number of same-sized mutually touching spheres (Figure \ref{fig:spheres-eg} - note that no numeric information about the spheres is given in this benchmark problem).

\medskip

\includegraphics[width=\columnwidth]{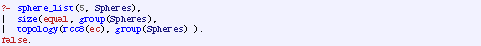}


%

\section{Conclusions}

Affine transformations provide an effective and interesting class of symmetries that can be used for pruning across a range of qualitative spatial relations. To summarise, we formalise the following knowledge as modular commonsense rules in CLP(QS): point-coincidence, line parallelism, topological and mereological relations are preserved with all affine transformations. Relative orientation changes with reflection, and qualitative distances and perpendicularity change with non-uniform scaling. Spheres, circles, and rectangles are not preserved with non-uniform scaling, with the exception of axis-aligned bounding boxes. Our algorithm is simple to implement, and is easily extended to handle more pruning cases. 

Theoretical and empirical results show that our method of pruning yields an improvement in performance by orders of magnitude over standard polynomial encodings without loss of soundness, thus increasing the horizon of spatial problems solvable with \emph{any} polynomial constraint solver. Furthermore, the declaratively formalised knowledge about pruning strategies is available to be utilised in a modular manner within other knowledge representation and reasoning frameworks that rely on specialised SMT solvers etc, e.g., in the manner demonstrated in ASPMT(QS) \citep{aspmtqs-lpnmr-2015}, which is a specialised non-monotonic spatial reasoning system built on top of answer set programming modulo theories.




%


\renewcommand{\bibname}{ References} 
\renewcommand\refname{ References} 

{\protect


\bibliographystyle{abbrvnat}
\bibliography{Clpqs-COSIT-2015-v4}

\begin{thebibliography}{40}
\providecommand{\natexlab}[1]{#1}
\providecommand{\url}[1]{\texttt{#1}}
\expandafter\ifx\csname urlstyle\endcsname\relax
  \providecommand{\doi}[1]{doi: #1}\else
  \providecommand{\doi}{doi: \begingroup \urlstyle{rm}\Url}\fi

\bibitem[Aiello et~al.(2007)Aiello, Pratt-Hartmann, and
  Benthem]{hdbook-spatial-logics}
M.~Aiello, I.~E. Pratt-Hartmann, and J.~F.~v. Benthem.
\newblock \emph{Handbook of Spatial Logics}.
\newblock Springer-Verlag New York, Inc., Secaucus, NJ, USA, 2007.
\newblock ISBN 978-1-4020-5586-7.

\bibitem[Arnon et~al.(1984)Arnon, Collins, and McCallum]{arnon1984cylindrical}
D.~S. Arnon, G.~E. Collins, and S.~McCallum.
\newblock {Cylindrical Algebraic Decomposition I: The basic algorithm}.
\newblock \emph{SIAM Journal on Computing}, 13\penalty0 (4):\penalty0 865--877,
  1984.

\bibitem[Bhatt and Wallgr{\"{u}}n(2014)]{Bhatt-Wallgruen-2014}
M.~Bhatt and J.~O. Wallgr{\"{u}}n.
\newblock Geospatial narratives and their spatio-temporal dynamics: Commonsense
  reasoning for high-level analyses in geographic information systems.
\newblock \emph{{ISPRS} Int. J. Geo-Information}, 3\penalty0 (1):\penalty0
  166--205, 2014.
\newblock \doi{10.3390/ijgi3010166}.
\newblock URL \url{http://dx.doi.org/10.3390/ijgi3010166}.

\bibitem[Bhatt et~al.(2011)Bhatt, Lee, and Schultz]{bhatt-et-al-2011}
M.~Bhatt, J.~H. Lee, and C.~Schultz.
\newblock {CLP(QS): A Declarative Spatial Reasoning Framework}.
\newblock In \emph{Proceedings of the 10th international conference on Spatial
  information theory}, COSIT'11, pages 210--230, Berlin, Heidelberg, 2011.
  Springer-Verlag.
\newblock ISBN 978-3-642-23195-7.

\bibitem[Bhatt et~al.(2013{\natexlab{a}})Bhatt, Schultz, and
  Freksa]{Bhatt-Schultz-Freksa:2013}
M.~Bhatt, C.~Schultz, and C.~Freksa.
\newblock {The `Space' in Spatial Assistance Systems: Conception, Formalisation
  and Computation}.
\newblock In T.~Tenbrink, J.~Wiener, and C.~Claramunt, editors,
  \emph{Representing space in cognition: Interrelations of behavior, language,
  and formal models. Series: Explorations in Language and Space}.
  978-0-19-967991-1, Oxford University Press, 2013{\natexlab{a}}.

\bibitem[Bhatt et~al.(2013{\natexlab{b}})Bhatt, Suchan, and
  Schultz]{Bhatt2013-CMN}
M.~Bhatt, J.~Suchan, and C.~Schultz.
\newblock {Cognitive Interpretation of Everyday Activities -- Toward Perceptual
  Narrative Based Visuo-Spatial Scene Interpretation}.
\newblock In M.~Finlayson, B.~Fisseni, B.~Loewe, and J.~C. Meister, editors,
  \emph{Computational Models of Narrative (CMN) 2013., a satellite workshop of
  CogSci 2013: The 35th meeting of the Cognitive Science Society.}, Dagstuhl,
  Germany, 2013{\natexlab{b}}. OpenAccess Series in Informatics (OASIcs).

\bibitem[Bhatt et~al.(2014)Bhatt, Schultz, and Thosar]{KR-2014-Bhatt}
M.~Bhatt, C.~P.~L. Schultz, and M.~Thosar.
\newblock Computing narratives of cognitive user experience for building design
  analysis: {KR} for industry scale computer-aided architecture design.
\newblock In C.~Baral, G.~D. Giacomo, and T.~Eiter, editors, \emph{Principles
  of Knowledge Representation and Reasoning: Proceedings of the Fourteenth
  International Conference, {KR} 2014, Vienna, Austria, July 20-24, 2014}.
  {AAAI} Press, 2014.
\newblock ISBN 978-1-57735-657-8.

\bibitem[Borgo(2013)]{borgo2013spheres}
S.~Borgo.
\newblock Spheres, cubes and simplexes in mereogeometry.
\newblock \emph{Logic and Logical Philosophy}, 22\penalty0 (3):\penalty0
  255--293, 2013.

\bibitem[Bouhineau(1996)]{bouhineau1996solving}
D.~Bouhineau.
\newblock {Solving geometrical constraint systems using CLP based on linear
  constraint solver}.
\newblock In \emph{Artificial Intelligence and Symbolic Mathematical
  Computation}, pages 274--288. Springer, 1996.

\bibitem[Bouhineau et~al.(1999)Bouhineau, Trilling, and
  Cohen]{bouhineau1999application}
D.~Bouhineau, L.~Trilling, and J.~Cohen.
\newblock {An application of CLP: Checking the correctness of theorems in
  geometry}.
\newblock \emph{Constraints}, 4\penalty0 (4):\penalty0 383--405, 1999.

\bibitem[Buchberger(2006)]{buchberger2006bruno}
B.~Buchberger.
\newblock {Bruno Buchberger's PhD thesis 1965:} an algorithm for finding the
  basis elements of the residue class ring of a zero dimensional polynomial
  ideal ({English} translation).
\newblock \emph{Journal of symbolic computation}, 41\penalty0 (3):\penalty0
  475--511, 2006.

\bibitem[Chou(1988)]{chou1988mechanical}
S.-C. Chou.
\newblock \emph{Mechanical geometry theorem proving}, volume~41.
\newblock Springer Science \& Business Media, 1988.

\bibitem[Collins(1975)]{collins1975quantifier}
G.~E. Collins.
\newblock {Quantifier elimination for real closed fields by cylindrical
  algebraic decompostion}.
\newblock In \emph{Automata Theory and Formal Languages 2nd GI Conference
  Kaiserslautern, May 20--23, 1975}, pages 134--183. Springer, 1975.

\bibitem[Collins and Hong(1991)]{Collins1991299}
G.~E. Collins and H.~Hong.
\newblock Partial cylindrical algebraic decomposition for quantifier
  elimination.
\newblock \emph{Journal of Symbolic Computation}, 12\penalty0 (3):\penalty0 299
  -- 328, 1991.
\newblock ISSN 0747-7171.

\bibitem[Dolzmann et~al.(2004)Dolzmann, Seidl, and Sturm]{Dolzmann2006}
A.~Dolzmann, A.~Seidl, and T.~Sturm.
\newblock \emph{{REDLOG User Manual}}, Apr. 2004.
\newblock Edition 3.0.

\bibitem[Gantner et~al.(2008)Gantner, Westphal, and W{\"o}lfl]{gantner2008gqr}
Z.~Gantner, M.~Westphal, and S.~W{\"o}lfl.
\newblock {GQR-A fast reasoner for binary qualitative constraint calculi}.
\newblock In \emph{Proc. of AAAI}, volume~8, 2008.

\bibitem[Hadas et~al.(2000)Hadas, Hershkowitz, and Schwarz]{hadas2000role}
N.~Hadas, R.~Hershkowitz, and B.~B. Schwarz.
\newblock The role of contradiction and uncertainty in promoting the need to
  prove in dynamic geometry environments.
\newblock \emph{Educational Studies in Mathematics}, 44\penalty0
  (1-2):\penalty0 127--150, 2000.

\bibitem[Haunold et~al.(1997)Haunold, Grumbach, Kuper, and Lacroix]{Grumbach97}
P.~Haunold, S.~Grumbach, G.~Kuper, and Z.~Lacroix.
\newblock Linear constraints: Geometric objects represented by inequalitiesl.
\newblock In S.~C. Hirtle and A.~U. Frank, editors, \emph{Spatial Information
  Theory A Theoretical Basis for GIS}, volume 1329 of \emph{Lecture Notes in
  Computer Science}, pages 429--440. Springer Berlin Heidelberg, 1997.
\newblock ISBN 978-3-540-63623-6.

\bibitem[Jaffar and Maher(1994)]{jaffar1994constraint}
J.~Jaffar and M.~J. Maher.
\newblock Constraint logic programming: A survey.
\newblock \emph{The journal of logic programming}, 19:\penalty0 503--581, 1994.

\bibitem[Kanellakis et~al.(1990)Kanellakis, Kuper, and
  Revesz]{DBLP:conf/pods/KanellakisKR90}
P.~C. Kanellakis, G.~M. Kuper, and P.~Z. Revesz.
\newblock Constraint query languages.
\newblock In D.~J. Rosenkrantz and Y.~Sagiv, editors, \emph{Proceedings of the
  Ninth {ACM} {SIGACT-SIGMOD-SIGART} Symposium on Principles of Database
  Systems, April 2-4, 1990, Nashville, Tennessee, {USA}}, pages 299--313. {ACM}
  Press, 1990.
\newblock ISBN 0-89791-352-3.

\bibitem[Kapur and Mundy(1988)]{Kapur:1989:GR:107262}
D.~Kapur and J.~L. Mundy, editors.
\newblock \emph{Geometric Reasoning}.
\newblock MIT Press, Cambridge, MA, USA, 1988.
\newblock ISBN 0-262-61058-2.

\bibitem[Ladkin and Maddux(1994)]{ladkin1994binary}
P.~B. Ladkin and R.~D. Maddux.
\newblock On binary constraint problems.
\newblock \emph{Journal of the ACM (JACM)}, 41\penalty0 (3):\penalty0 435--469,
  1994.

\bibitem[Lee(2014)]{leecomplexity-ecai2014}
J.~H. Lee.
\newblock The complexity of reasoning with relative directions.
\newblock In \emph{21st European Conference on Artificial Intelligence (ECAI
  2014)}, 2014.

\bibitem[Ligozat(2011)]{ligozat-book}
G.~Ligozat.
\newblock \emph{{Qualitative Spatial and Temporal Reasoning}}.
\newblock Wiley-ISTE, 2011.

\bibitem[Ligozat(1993)]{ligozat1993qualitative}
G.~F. Ligozat.
\newblock Qualitative triangulation for spatial reasoning.
\newblock In \emph{Spatial Information Theory A Theoretical Basis for GIS},
  pages 54--68. Springer, 1993.

\bibitem[Martin(1982)]{martin1982transformation}
G.~E. Martin.
\newblock \emph{Transformation geometry: An introduction to symmetry}.
\newblock Springer, 1982.

\bibitem[Owen(1991)]{owen1991algebraic}
J.~C. Owen.
\newblock Algebraic solution for geometry from dimensional constraints.
\newblock In \emph{Proceedings of the first ACM symposium on Solid modeling
  foundations and CAD/CAM applications}, pages 397--407. ACM, 1991.

\bibitem[Pesant and Boyer(1994)]{pesant1994quad}
G.~Pesant and M.~Boyer.
\newblock {QUAD-CLP (R): Adding the power of quadratic constraints}.
\newblock In \emph{Principles and Practice of Constraint Programming}, pages
  95--108. Springer, 1994.

\bibitem[Pesant and Boyer(1999)]{pesant1999reasoning}
G.~Pesant and M.~Boyer.
\newblock Reasoning about solids using constraint logic programming.
\newblock \emph{Journal of Automated Reasoning}, 22\penalty0 (3):\penalty0
  241--262, 1999.

\bibitem[Randell et~al.(1992)Randell, Cohn, and
  Cui]{DBLP:conf/cade/RandellCC92}
D.~A. Randell, A.~G. Cohn, and Z.~Cui.
\newblock Computing transitivity tables: {A} challenge for automated theorem
  provers.
\newblock In \emph{11th International Conference on Automated Deduction
  (CADE-11)}, pages 786--790, 1992.

\bibitem[Ratschan(2002)]{ratschan2002approximate}
S.~Ratschan.
\newblock Approximate quantified constraint solving by cylindrical box
  decomposition.
\newblock \emph{Reliable Computing}, 8\penalty0 (1):\penalty0 21--42, 2002.

\bibitem[Ratschan(2006)]{ratschan2006efficient}
S.~Ratschan.
\newblock Efficient solving of quantified inequality constraints over the real
  numbers.
\newblock \emph{ACM Transactions on Computational Logic (TOCL)}, 7\penalty0
  (4):\penalty0 723--748, 2006.

\bibitem[Schultz and Bhatt(2012)]{schultz-bhatt-2012}
C.~Schultz and M.~Bhatt.
\newblock {Towards a Declarative Spatial Reasoning System}.
\newblock In \emph{20th European Conference on Artificial Intelligence (ECAI
  2012)}, 2012.

\bibitem[Schultz and Bhatt(2014)]{DBLP:conf/ecai/SchultzB14}
C.~Schultz and M.~Bhatt.
\newblock Declarative spatial reasoning with boolean combinations of
  axis-aligned rectangular polytopes.
\newblock In \emph{{ECAI} 2014 - 21st European Conference on Artificial
  Intelligence}, pages 795--800, 2014.

\bibitem[Schultz et~al.(2014)Schultz, Bhatt, and
  Borrmann]{Schultz-Bhatt-Borrmann-EGICE2014}
C.~Schultz, M.~Bhatt, and A.~Borrmann.
\newblock Bridging qualitative spatial constraints and parametric design - a
  use case with visibility constraints.
\newblock In \emph{EG-ICE: 21st International Workshop - Intelligent Computing
  in Engineering 2014}, 2014.

\bibitem[Tarski(1956)]{tarski1956general}
A.~Tarski.
\newblock {A general theorem concerning primitive notions of Euclidean
  geometry}.
\newblock \emph{Indagationes Mathematicae}, 18\penalty0 (468):\penalty0 74,
  1956.

\bibitem[Varzi(1996)]{varzi1996parts}
A.~C. Varzi.
\newblock {Parts, wholes, and part-whole relations: The prospects of
  mereotopology}.
\newblock \emph{Data \& Knowledge Engineering}, 20\penalty0 (3):\penalty0
  259--286, 1996.

\bibitem[Walega et~al.(2015)Walega, Bhatt, and Schultz]{aspmtqs-lpnmr-2015}
P.~Walega, M.~Bhatt, and C.~Schultz.
\newblock {ASPMT(QS): Non-Monotonic Spatial Reasoning with Answer Set
  Programming Modulo Theories}.
\newblock In \emph{LPNMR: Logic Programming and Nonmonotonic Reasoning - 13th
  International Conference}, 2015.
\newblock URL \url{http://lpnmr2015.mat.unical.it}.

\bibitem[Wallgr{\"u}n et~al.(2007)Wallgr{\"u}n, Frommberger, Wolter, Dylla, and
  Freksa]{cosy:sparq-sc06}
J.~O. Wallgr{\"u}n, L.~Frommberger, D.~Wolter, F.~Dylla, and C.~Freksa.
\newblock {Qualitative Spatial Representation and Reasoning in the
  SparQ-Toolbox}.
\newblock In T.~Barkowsky, M.~Knauff, G.~Ligozat, and D.~Montello, editors,
  \emph{Spatial Cognition V: Reasoning, Action, Interaction}, volume 4387 of
  \emph{Lecture Notes in Computer Science}, pages 39--58. Springer, 2007.

\bibitem[Wenjun(1984)]{wenjun1984basic}
W.~Wenjun.
\newblock Basic principles of mechanical theorem proving in elementary
  geometries.
\newblock \emph{Journal of Systems Sciences \& Mathematical Sciences},
  4\penalty0 (3):\penalty0 207--235, 1984.

\end{thebibliography}

}



\end{document}